\definecolor{lightred}{RGB}{255, 204, 204}
\definecolor{lightblue}{RGB}{204, 229, 255}
\definecolor{lightgreen}{RGB}{204, 255, 204}
\definecolor{lightyellow}{RGB}{255, 255, 204}
\definecolor{lightgray}{gray}{0.9}
\definecolor{lightorange}{RGB}{255, 230, 153}
\newtheorem{lemma}{Lemma}[section]
\newtheorem{theorem}{Theorem}[section]
\crefname{section}{Sec.}{Secs.}
\Crefname{section}{Section}{Sections}
\Crefname{table}{Table}{Tables}
\crefname{table}{Tab.}{Tabs.}
\def\wacvPaperID{1594} 
\def\confName{WACV}
\def\confYear{2025}
\begin{document}

\title{Forget Less by Learning Together through Concept Consolidation}

\author{
Arjun Ramesh Kaushik \qquad Naresh Kumar Devulapally \qquad Vishnu Suresh Lokhande \\ Nalini Ratha \qquad Venu Govindaraju\\
University at Buffalo, SUNY\\
{\tt\small \{kaushik3, devulapa, vishnulo, nratha, govind\}@buffalo.edu}
}
\maketitle

\begin{abstract}
Custom Diffusion Models (CDMs) have gained significant attention due to their remarkable ability to personalize generative processes. However, existing CDMs suffer from catastrophic forgetting when continuously learning new concepts. Most prior works attempt to mitigate this issue under the sequential learning setting with a fixed order of concept inflow and neglect inter-concept interactions. In this paper, we propose a novel framework - Forget Less by Learning Together (FL2T) - that enables concurrent and order-agnostic concept learning while addressing catastrophic forgetting. Specifically, we introduce a set-invariant inter-concept learning module where proxies guide feature selection across concepts, facilitating improved knowledge retention and transfer. By leveraging inter-concept guidance, our approach preserves old concepts while efficiently incorporating new ones. Extensive experiments, across three datasets, demonstrates that our method significantly improves concept retention and mitigates catastrophic forgetting, highlighting the effectiveness of inter-concept catalytic behavior in incremental concept learning of ten tasks with at least 2\% gain on average CLIP Image Alignment scores.
\end{abstract}

\begin{figure}
    \centering
    \includegraphics[width=0.5\textwidth]{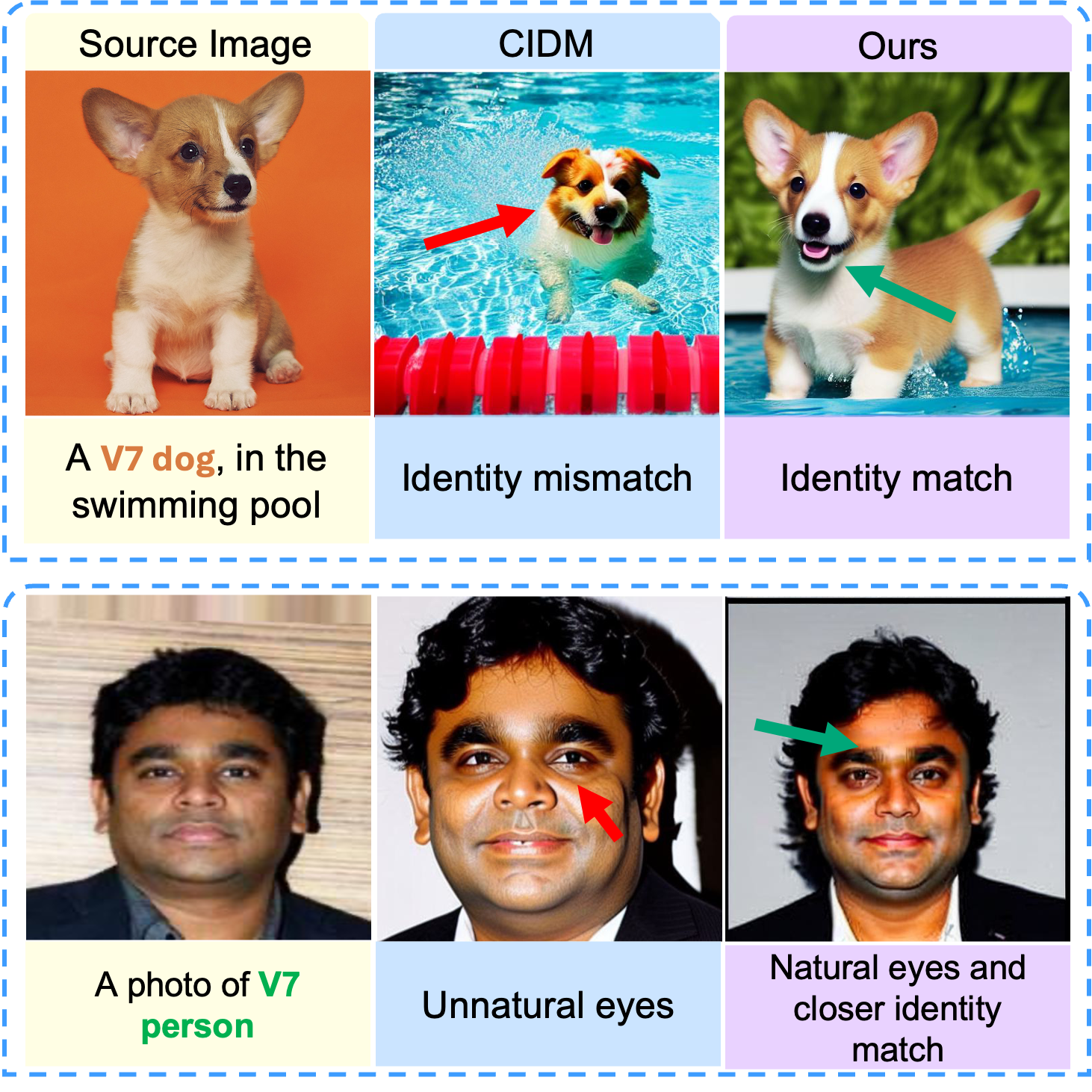}
    \caption{\textbf{Forget Less by Learning Together (FL2T).} Our method focuses on learning $G$ concepts in an order-agnostic incremental learning problem setting with fewer parameters and fewer reference images while also mitigating catastrophic forgetting. Unlike previous works, we leverage inter-concept interactions positively. The above image showcases examples of generated images, with a source image and an associated text prompt (left column) as input. We evaluate the generated image (right column) against the SOTA model CIDM \cite{cidm} (middle column). The red and green arrows indicate regions of undesirable and desirable qualities, and their reasons are stated below each image. }
    \label{fig:teaser}
\end{figure}

\section{Introduction}
\label{sec:intro}
\noindent Advancements in training, architectures, and datasets have enhanced text-conditioned generative models, enabling photorealistic text-to-image synthesis \cite{ldm_text1, ldm_text2, rifegan}. Conditional diffusion models facilitate high-fidelity image generation from text prompts and support user control over scene layout and sketches \cite{referring_image_editing, mokady2022nulltextinversioneditingreal, ranni, apisr}. This level of quality in text-conditioned generative models opens up new avenues for customization and personalization, allowing for the generation of images featuring specific objects or individuals in novel contexts or backgrounds. The ability to precisely capture concepts from reference images and seamlessly integrate them with text prompts is a significant advancement. Early approaches to model customization typically involved fine-tuning a subset of model parameters \cite{yu2024dreamsteererenhancingsourceimage} or text token embeddings \cite{cdm1} using a small set of user-provided reference images, often with additional regularization techniques \cite{wang2024multiclasstextualinversionsecretlyyields}. However, this fine-tuning process, which is required for each new concept, is both time-intensive and computationally costly. As a result, recent efforts have shifted toward developing tuning-free methods that eliminate the need for per-object optimization during inference, thus offering a more efficient and scalable solution for customization.

Concurrently, efforts are underway to expand the capability of model customization beyond a single, individual concept, aiming to seamlessly compose multiple new concepts together. However, this approach introduces additional challenges, such as the integration of unseen concepts and the potential for catastrophic forgetting of previously learned personalized concepts when new concepts are learned in succession through a concept-incremental framework. Additionally, issues like concept neglect may arise when performing multi-concept composition based on user-defined conditions \cite{cidm, mix_of_show}. A common setup in this area involves continuous customization with new, fine-grained concepts that have not been encountered before, typically with only a few examples provided for each new concept. This setup aligns with the concept of continual learning, which focuses on training a model across a sequence of tasks, each with a distinct data distribution, while striving to retain previously acquired knowledge without interference or forgetting.

The continual learning framework provides a promising direction for multi-concept personalization \cite{Kumari2023MultiConcept, Smith2024ContinualDiffusion, cidm}. Fundamentally, multi-concept personalization is closely related to training with larger datasets, as it inherently involves access to a greater number of reference images compared to single-concept personalization \cite{cidm, hao2023vicoplugandplayvisualcondition, dreambooth}. Existing approaches predominantly adopt a sequential concept learning paradigm, where new concepts are learned one after another. While this is practical, it raises the question: \textit{If reference images for all concepts are available from the outset, why not leverage them for simultaneous joint learning?} This paper aims to address this critical gap by proposing a framework that enables the concurrent learning of multiple concepts, allowing each concept to aid in the learning of others \cite{Kumari2023MultiConcept}. Beyond mitigating catastrophic forgetting, this joint learning approach fosters mutual knowledge transfer between concepts, enhancing overall learning efficiency and improving the quality of multi-concept personalization.

Most vision-language problems are instance-based, typically involving the mapping of a fixed-dimensional input tensor to a corresponding target value. However, multi-concept learning can be better formulated as a set-input problem \cite{lee2019settransformerframeworkattentionbased}, where the input consists of a set of concepts or reference images, and the goal is to generate a corresponding output for the entire set. Effectively addressing this requires treating multi-concept learning within the framework of set-based modeling. A model designed for set-input problems must satisfy two key properties: (1) \textbf{permutation invariance}, meaning that the model's output should remain unchanged regardless of the order in which input elements are presented, and (2) \textbf{scalability}, ensuring that the model can handle input sets of varying sizes without requiring retraining. Building upon these principles, this paper introduces a novel approach to multi-concept learning that jointly learns from multiple concepts while adhering to these fundamental properties.

Specifically, the contributions of this paper are as follows: 
\begin{enumerate}
    \item We propose a two-step joint / parallel learning framework - \textbf{Forget Less by Learning Together (FL2T)} - using a text-to-image diffusion model, where we offer flexibility in the order of concept inflow. 
    \item Utilizing the permutation invariance property of transformers, we develop a concept interaction module that uses proxy embeddings to help alleviate catastrophic forgetting. Additionally, FL2T also accounts for scalability, showing improved performance over SOTA methods with fewer parameters and fewer reference images.
\end{enumerate}

\section{Related Work}
\label{sec:relatedWork}


\noindent\textbf{Continual learning for generative models.}
For VAEs, Ye,  Bors \cite{YeBors2022OCM,YeBors2023CGKD} propose online cooperative memorization and continual generative knowledge distillation to mitigate forgetting. Within diffusion models, Zaj \etal \cite{Zajac2023ExploringCLDM} provide an early study that benchmarks Continual Learning baselines and reports timestep-dependent forgetting patterns and strong replay variants. Masip \etal \cite{Masip2024GenDistill} introduce \emph{generative distillation} tailored to diffusion, improving stability across tasks. Recent works \cite{Yoo2024LifelongVideo} also explore lifelong training settings beyond still images, e.g., lifelong video diffusion from a single stream using experience replay and dynamic architectural expansion for diffusion \cite{Ye2025DynamicExpansionDiffusion}.

\noindent\textbf{Forgetting in personalization/customization.}
Open-world customization emphasizes that adapting a model to new concepts can induce broad semantic and appearance drift, i.e., forgetting beyond the targeted concepts \cite{Laria2024OpenWorldForgetting}. In continual customization, \emph{Continual Diffusion (C-LoRA)} \cite{Smith2024ContinualDiffusion} adds concepts sequentially with parameter-efficient adapters and explicit self-regularization.

\noindent\textbf{Personalization and multi-concept customization.}
Foundational personalization methods such as Textual Inversion \cite{cdm1} and DreamBooth \cite{dreambooth} adapt large text-to-image models to user-provided subjects from a few sample images. For multi-concept learning and composition, Multi-Concept Customization \cite{Kumari2023MultiConcept} and Mix-of-Show \cite{Gu2023MixOfShow} introduce mechanisms for jointly learning and composing multiple concepts.

\noindent\textbf{Interpretability and intra-concept dynamics in diffusion.}
Analyses of cross-attention provide tools for understanding concept binding and interference during denoising. Prompt-to-Prompt shows that cross-attention controls spatial grounding and can be edited to localize changes \cite{Hertz2023P2P}, while DAAM \cite{Tang2023DAAM} performs token-level attribution in Stable Diffusion. These insights motivate designing mechanisms that preserve per-concept subspaces while supporting compositionality.

Prior work \cite{Zajac2023ExploringCLDM,Masip2024GenDistill} on diffusion CL highlights timestep-dependent forgetting and the utility of distillation/replay; continual customization targets sequential concept addition with adapters \cite{Smith2024ContinualDiffusion}. Our approach addresses this intersection by enabling \emph{order-agnostic} concept learning with mechanisms aimed at reducing cross-concept interference while maintaining compositional flexibility.

\section{Problem Definition}
\label{sec:problem}
\noindent In light of these challenges, we introduce a problem setting called \textbf{Order-agnostic Concept-Incremental Flexible Customization}. Building from CIFC \cite{cidm}, we consider a setting with a variable order of concept inflow. For our problem statement, we assume that the model learns from an undefined series of text-guided concept customization tasks $T = \{T_g\}_{g=1}^{G}$, where $G$ denotes the total number of tasks. Each task $T_g$ consists of a dataset $T_g = \{(x_g^k, p_g^k, y_g^k)\}_{k=1}^{n_g}$ where $n_g$ is the number of triplets in the task, $x_k^g$ is an image, $p_k^g$ is a text prompt (e.g., ``photo of a [V$^*$] [V$_{cat}$]''), and $y_k^g \in Y_g$ represents the concept tokens in $p_k^g$. Similar to CIFC, our setting also supports diverse customization tasks, including multi-concept generation \cite{multi_lora}, style transfer \cite{zhang2023inversionbasedstyletransferdiffusion}, and image editing \cite{chen2024anydoorzeroshotobjectlevelimage}. We define $Y_g$ as the concept space for task $g$, $Y_g = \bigcup_{k=1}^{n_g} y_k^g$ and $C_g$ represents the number of new concepts introduced in task $g$. In summary, our problem setting consists of three constraints -
\begin{enumerate}
    \item \textbf{All concepts are distinct :} $ Y_g \cap \left(\bigcup_{i=1}^{G-1} Y_i \right) = \emptyset$ indicating that new concepts in task $g$ are distinct from those learned in the other $G-1$ tasks. 
    \item \textbf{Concepts can be learned in any order :} $ T = \{ \pi(T_1 \cup T_2 \cup \dots \cup T_G) \mid \pi \in S_G \} $
indicating that \( T \) consists of any combination of the \( G \) concepts in any order, where \( S_G \) denotes the symmetric group of \( G \) elements, representing all possible permutations. 
    \item \textbf{No-replay constraint :} No memory storage is allocated to retain training data from past tasks, ensuring purely incremental learning of personalized concepts.
\end{enumerate}
 This setting allows models to continuously adapt to new personalization tasks while being exposed to catastrophic forgetting of prior concepts.

\section{Background}
\label{sec:prelim}

\noindent Despite their excellent generation capabilities, existing Custom Diffusion Models (CDMs) \cite{cdm1, cdm2, cdm3, mix_of_show} (Refer \ref{subsec:bg_cdms} for details on CDMs) assume that the number of personalized concepts remains fixed over time, which is unrealistic in real-world applications where users continuously introduce new concepts. Moreover, these models struggle with \textit{catastrophic forgetting} \cite{rebuffi2017icarlincrementalclassifierrepresentation} of previously learned concepts. Recent works \cite{ewc, lwf, Smith2024ContinualDiffusion, l2dm, cidm} have proposed various techniques to address the variability in the number of concepts and catastrophic forgetting. In this paper, we will focus on the SOTA framework CIDM \cite{cidm}.

\textbf{Concept Consolidation Loss (CCL).} Introduced in \cite{cidm}, the main purpose of CCL is to facilitate incremental concept learning for a defined order of concept inflow while mitigating catastrophic forgetting. CCL consists of two aspects: Task-Specific Knowledge (TSP) and Task-Shared Knowledge (TSH).

TSP is enhances the discriminative ability of the model towards concepts in an incremental learning setup. It uses an orthogonal subspace regularizer to constrain the LoRA weights of different customization tasks. Given the low-rank weight $\Delta\theta_g = \{\Delta \mathbf{W}_g^l\}_{l=1}^{L}$ for task $g$, where $l$ denotes the $l$-th transformer layer and $L$ is the number of layers, each weight matrix can be factorized as $\Delta \mathbf{W}_g^l = \mathbf{A}_g^l \mathbf{B}_g^l$, where $\mathbf{A}_g^l \in \mathbb{R}^{a \times r}$ represents a low-rank concept subspace and $\mathbf{B}_g^l \in \mathbb{R}^{r \times b}$ is its linear weighting matrix. To regulate orthogonality between subspaces of different tasks, TSP minimizes the constraint $\mathcal{R}_1$. On the other hand, TSH captures the semantic patterns that persist across different tasks. It introduces a layer-wise common subspace \( \mathbf{W}_*^l \) that is shared across tasks.
A learnable projection matrix \( \mathbf{H}_i^l \) encodes common semantic attributes, optimized by minimizing reconstruction loss.  
\begin{equation}
\mathcal{R}_1 = \sum_{i=1}^{g-1} \sum_{l=1}^{L} \mathbf{A}_i^l (\mathbf{A}_g^l)^\top
\end{equation}
\begin{equation}
    \mathcal{R}_2 = \sum_{i=1}^{g} \sum_{l=1}^{L} \left\| \Delta \mathbf{W}_i^l - \mathbf{H}_i^l \mathbf{W}_*^l \right\|_F^2
\end{equation}
EWA is the foundation of inferencing where previously trained task learners are preserved with a 0.25\% memory overhead compared to the SD-1.5 model \cite{finetune2}. The learned LoRA weights are aggregated based on the semantic relationship between stored concept token embedding ($\hat{e}$) and current task embedding ($\hat{e}$) in order to reduce forgetting. The aggregated low-rank weight \( \Delta \mathbf{\hat{W}}^l \) in the \( l \)-th transformer layer is formulated as:

\begin{equation}\label{eq:ewa}
\mathbf{\mathcal{M}} = \max (\mathbf{\hat{c}}^l \cdot (\mathbf{\hat{e}}^l)^\top) \quad 
\Delta \mathbf{\hat{W}}^l = \sum_{i=1}^{g} \Delta \mathbf{W}_i^l \cdot \psi(\mathbf{\mathcal{M}})_i
\end{equation}
where \( \max(\cdot) \) operates along the row axis, and $ \psi(\mathbf{\mathcal{M}}) = \mathbf{\mathcal{M}}^2 / \|\mathbf{\mathcal{M}}^2\|_F \in \mathbb{R}^{g}$ normalizes the semantic relations. Here, \( \psi(\mathbf{\mathcal{M}})_i \) denotes the \( i \)-th element of \( \psi(\mathbf{\mathcal{M}})\).

\textbf{Inference.} After training and applying Eq. \ref{eq:ewa} to aggregate the learned low-rank weights, we obtain the updated denoising UNet \( \epsilon_{\theta'_{\ast}}(\cdot) \) for inference, where \( \theta'_{\ast} = \theta_0 + \Delta \theta_{\ast} \) and \( \Delta \theta_{\ast} = \{\Delta \mathbf{\hat{W}}^l\}_{l=1}^{L} \). This formulation effectively consolidates the essential characteristics of all personalized concepts.

\begin{figure*}[!ht]
    \centering
    \includegraphics[width=\textwidth]{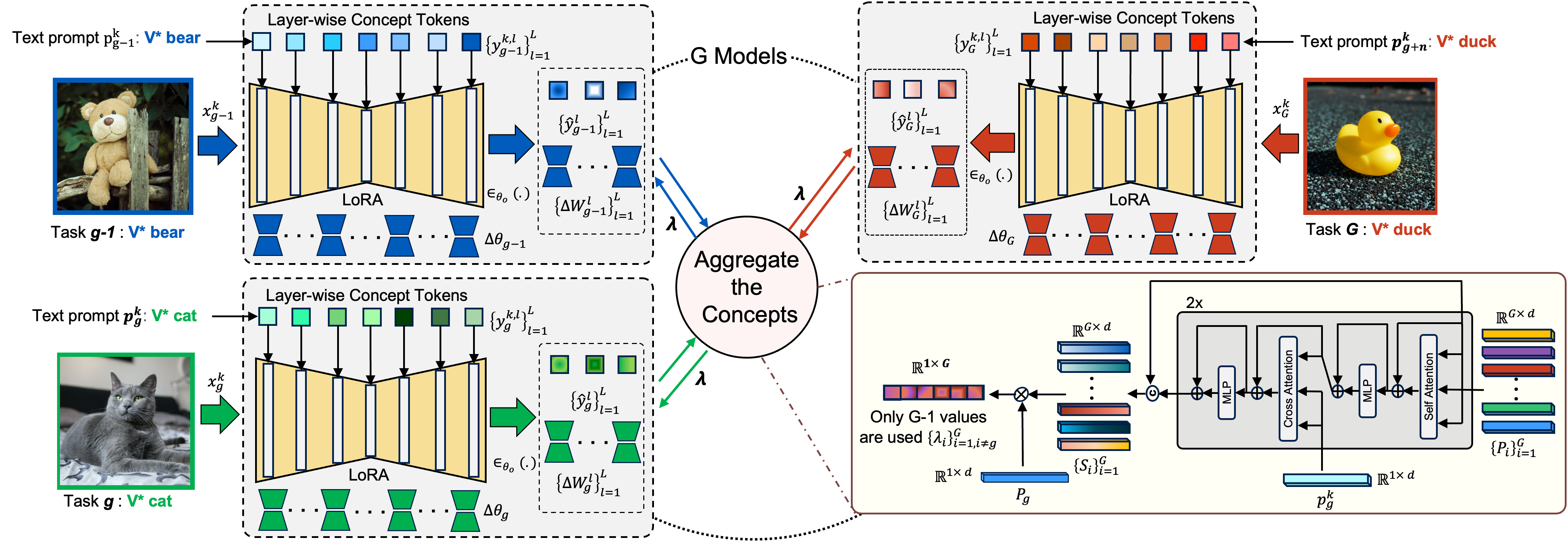}
    \caption{\textbf{FL2T overview.} We begin by independently training $G$ models on $G$ concepts for one epoch. Subsequently, in the second epoch, we utilize the learnt concept embeddings (from epoch one) across all concepts to perform cross-concept interactions or aggregate the concepts. To perform these interactions, we use the proxies (initialized with concept embeddings from epoch one) and two transformer layers. The goal of using transformer layers is to capture higher-order interactions between the concept embeddings, $g$-th concept embedding, and input prompt for task $g$. Subsequently, we compute a similarity matrix through matrix multiplication to weight the inter-concept interaction. This framework allows variable order of concept inflow, mitigates catastrophic forgetting, and outperforms SOTA frameworks with fewer parameters and fewer reference images.    }
    \label{fig:main}
\end{figure*}

\section{Forget Less by Learning Together (FL2T)}
\noindent We now present FL2T, a two-step training approach to personalizing diffusion models without the semantic loss of concepts. In the first step, we independently train $G$ different models, where $G$ is the number of concepts. Then, we consolidate all concepts into a single model, leveraging inter-concept interactions for effective personalization.

As in \cite{cidm}, to adapt the $\mathbf{g}$-th text-conditioned concept customization task $T_g$, we employ Low-Rank Adaptation (LoRA) \cite{lora1, lora2} to fine-tune the pretrained denoising UNet, commonly denoted using $\epsilon$. We define our pretrained UNet model, $\epsilon_{\theta_0}(\cdot)$, to operate on a set of personalized samples $\{\mathbf{x}_g^k, \mathbf{p}_g^k, \mathbf{y}_g^k\}_{k=1}^{n_g}$. Recall that $n_g$ is the number of triplets in the task, $x_k^g$ is an image, $p_k^g$ is a text prompt (e.g., ``photo of a [V$^*$] [V$_{cat}$]''), and $y_k^g \in Y_g$ represents the concept tokens in $p_k^g$. The model is trained on these samples to yield an updated model $\epsilon_{\theta'_g}(\cdot)$, where $\theta'_g$ is obtained by modifying the base parameters $\theta_0$ with task-specific updates, $\Delta\theta_g$: $\theta'_g = \theta_0 + \Delta\theta_g$. The task-specific update $\Delta\theta_g = \{\Delta \mathbf{W}_g^l\}_{l=1}^{L}$ consists of low-rank weight modifications for the $l$-th transformer layer. We know that, from Sec \ref{sec:prelim}, $\Delta \mathbf{W}_g^l \in \mathbb{R}^{a \times b}$ is factorized as $\mathbf{A}_g^l \mathbf{B}_g^l$, with $\mathbf{A}_g^l \in \mathbb{R}^{a \times r}$ and $\mathbf{B}_g^l \in \mathbb{R}^{r \times b}$. As demonstrated in prior research \cite{lora1, lora2}, with these adjustments, $\Delta\theta_g$ can encode the distinct characteristics of the personalized concept $\mathbf{C}_g$ for task $g$.

Solutions to our problem setting have been partly addressed in \cite{ewc, lwf, Smith2024ContinualDiffusion, l2dm, cidm}. Although these works reduce catastrophic forgetting, they all operate under a single-umbrella assumption that the order of concepts for customization tasks is fixed. Additionally, to the best of our knowledge, no work uses inter-concept interactions positively. To this end, we propose a novel framework that allows an order-agnostic incremental concept learning and uses inter-concept interactions positively to mitigate catastrophic forgetting.

In order to achieve an optimal order-agnostic continual learning framework, we leverage two intrinsic properties of attention: (1) \textbf{permutation invariance / set-invariance}, and (2) \textbf{ability to effectively capture higher-order interactions}. Set transformers, introduced in \cite{lee2019settransformerframeworkattentionbased}, have been used to encode pairwise or higher-order interactions between the elements of a set-invariant input. For an input set of $n$ d-dimensional observations, the Set Transformer combines attention over the $d$ input dimensions with nonlinear functions of pairwise interactions between the input observations. Further, we have also elucidated a detailed study describing the number of higher-order interactions captured by multiple-layers of attention in Sec. \ref{subsec:attn} and Table \ref{tab:interactions_compact}.

Additionally, a theoretical study on the drift of model weights showcases that FL2T (Ours) can achieve a lower model drift (see Sec. \ref{subsec:proof}).
We begin by analyzing one-step model drift when aggregating per-concept gradients $\{m_i\}_{i=1}^G$ either by uniform summation (CIDM: $M_{\mathrm{CIDM}}=\sum_i m_i$) (where $M$ denotes the model parameters) or by unnormalized attention weights (FL2T: $M_{\mathrm{FL2T}}=\sum_i \lambda_i m_i$ with $\lambda_i\in[-1,1]$).  
Lemma \ref{lem:upper-bound} establishes a universal upper bound: 
$\|M_{\mathrm{FL2T}}\|\;\le\;\sum_i\|\lambda_i\|$
which agrees with the worst-case drift of uniform summation and holds for any choice of $\lambda_i$ in $[-1,1]$.  

Then Theorem \ref{thm:existence-reduced} shows that whenever $M_{\mathrm{CIDM}}\neq 0$, one can choose some coefficients (not all equal to 1) so that $
\|M_{\mathrm{FL2T}}\| \;<\; \|M_{\mathrm{CIDM}}\|.
$ The constructive proof selects a concept $g$ whose gradient is positively aligned with the aggregate and slightly down-weights it by a small $\varepsilon>0$, reducing the norm of the update. Intuitively, unnormalized attention performs geometry-aware reweighting that allows cancellation of dominant components of $M_{\mathrm{CIDM}}$, thus consolidating concepts while producing smaller parameter drift per step. When $M_{\mathrm{CIDM}}=0$, drift is already minimal.  

This result formalizes why \textbf{attention-based aggregation} is a principled mechanism for our concept consolidation. It \textbf{attains lower update magnitude than uniform summation} while preserving flexibility to emphasize or de-emphasize individual concepts, mirroring ideas in multi-objective optimization about mitigating destructive interference between objectives.

\subsection{Independent Concept Training (Step 1)}
\label{subsec:step1}
 FL2T builds on the key empirical assumption that the concept inflow in Customized Diffusion Models (CDMs) is order-agnostic based on the problem setting in Sec \ref{sec:problem}. We begin by training each model $M_i$ on its respective task $T_i$ where $i \in [1,G]$. This helps each model learn a stable concept embedding $C_g$, capturing all semantics relevant to task $T_g$.

\subsection{Concept Aggregation (Step 2)} 

Recent works \cite{ewc, lwf, Smith2024ContinualDiffusion, l2dm, cidm} have all proposed methods to reduce inter-concept interaction, assuming that the interaction is harmful and can cause the overwriting of concepts. We hypothesise that this inter-concept interaction can rather catalyse generation abilities of concepts and reduce catastrophic forgetting. 

Consider the set of concepts to be a directed graph where an edge from concept A to B symbolizes relevant information for B contained within A and vice versa. We define our goal as learning this relevance weight for set-invariant concepts. There are two challenges associated with achieving this, (1) the relevance weights change based on the input prompt, and (2) input data of other concepts is inaccessible. To this end, we utilize learnable embeddings to capture the relevance weights between concepts. These learnable embeddings \( \{\mathbf{P}_i\}_{i=1}^G \), also known as proxy embeddings, are initialized with stable concept embeddings \( \{\mathbf{C}_i\}_{i=1}^G \) learned in Step 1. 

Motivated by the properties of attention in providing permutation invariance, higher-order interactions and lower model-drift, we propose to use transformer decoders for our order-agnostic (set-invariant) framework. The transformer decoder module intends to capture the relevance weights (higher-order interactions) between concepts through proxy embeddings. 

Transformer decoders \cite{vaswani2023attentionneed} consist of two main blocks: self-attention and cross-attention. The proxy embeddings \( \{\mathbf{P}_i\}_{i=1}^G \) serve as input to the decoder that first performs self-attention. Subsequently, it is cross-attended with input text-prompt for task $g$, outputtting \( \{\mathbf{P'}_i\}_{i=1}^G \). The objective of this is to capture the relevance scores between concepts for a particular text prompt. Although transformers are adept at capturing higher-order interactions, they are also susceptible to rank collapse \cite{rank_collapse}.

To address this rank collapse, we introduce two things: (1) a non-linear transformation layer such as an MLP, and  (2) contrastive loss. The attended proxy embeddings \( \{\mathbf{P'}_i\}_{i=1}^G \) are combined with the original concept embeddings: $S_i = \{f(\mathbf{C}_i | \mathbf{P'}_i)\}_{\forall i \neq g}$, where \( f(\cdot | \cdot) \) represents a non-linear transformation such as an MLP on two concatenated vectors. On the other hand, the contrastive loss constrains embeddings to be distinct and away from each other in terms of cosine distance while ensuring they capture relevant information. It is applied on $\{\mathbf{S}_i\}_{i=1}^G$ as given in Eq. \ref{eq:contra} where $\text{sim}(\mathbf{z}_i, \mathbf{z}_j) = \frac{\mathbf{z}_i \cdot \mathbf{z}_j}{\|\mathbf{z}_i\| \|\mathbf{z}_j\|}$ and $\mathbf{\tau}$ is the temperature hyperparameter. 

\begin{equation}\label{eq:contra}
\mathcal{R}_3 = \frac{1}{G} \sum_{i=1}^{G} -\log \frac{\exp(\text{sim}(S_i, S_i) / \tau)}{\sum_{j=1}^{N} \exp(\text{sim}(S_i, S_j) / \tau)}
\end{equation}

 Recall that in Sec \ref{sec:prelim}, we introduced Task-Specific Knowledge that regulates orthogonality between subspaces of different tasks. Here, we extend that by weighting the constraint with relevance weights between concepts depending on the input prompt $\mathbf{p}_g^k$ :  
\begin{equation}
\mathcal{R'}_1 = \sum_{i=1, i \neq g}^{G} \sum_{l=1}^{L} \lambda_{i} \mathbf{A}_i^l (\mathbf{A}_g^l)^\top 
\end{equation}

where $\lambda$ is the edge weight derived through proxy embeddings for each concept $g$. For a given task \( T_g \), we compute the importance weights \( \{\mathbf{\lambda}_i\}_{i=1, i \neq g}^{G} \) by measuring the similarity between the concept embedding \( \mathbf{C}_g \) and the transformed representations \( \mathbf{S}_i \) that has captured the higher-order interactions of concepts:
$\mathbf{\lambda}_i = \{\mathbf{C}_g \mathbf{S}_i^\top\}_{\forall i \neq g}
$. This formulation enables the selection of the most relevant concepts for each task.

Finally, from Sec \ref{sec:prelim}, we define the loss function as:

\begin{equation}\label{eq:cil}
\begin{split}
\mathcal{L} = \mathcal{E}_{z \sim E(x_g^k), c_g^k, \epsilon \sim \mathcal{N}(0, I), t} 
\Bigg[ \| \epsilon - \epsilon_{\theta'_g} (z_t | c_g^k, t) \|_2^2  \\
+ \mathcal{R'}_1 + \gamma_1 \mathcal{R}_2 + \gamma_2 \mathcal{R}_3 \Bigg]
\end{split}
\end{equation}

where \( \mathbf{c}_g^k = \{\mathbf{c}_g^{k,l}\}_{l=1}^{L} \) represents layer-wise textual embeddings, and \(\gamma\) is the trade-off hyperparameter.

\begin{table*}[!h]
    \centering
    \renewcommand{\arraystretch}{1.5}
    \resizebox{\textwidth}{!}{
    \begin{tabular}{cc|cc|cc|cc|cc|cc|cc|cc|cc|cc|cc|cccc}
        \hline
        \hline
        & \multirow{2}{*}{\textbf{Methods}} & \multicolumn{2}{|c|}{\textbf{V1}} & \multicolumn{2}{|c|}{\textbf{V2}} & \multicolumn{2}{|c|}{\textbf{V3}} & \multicolumn{2}{|c|}{\textbf{V4}} & \multicolumn{2}{|c|}{\textbf{V5}} & \multicolumn{2}{|c|}{\textbf{V6}} & \multicolumn{2}{|c|}{\textbf{V7}} & \multicolumn{2}{|c|}{\textbf{V8}} & \multicolumn{2}{|c|}{\textbf{V9}} & \multicolumn{2}{|c|}{\textbf{V10}} & \multicolumn{2}{|c}{\textbf{Avg.}} \\
        \cline{3-24}  
         & & \textbf{IA} & \cellcolor{lightblue}\textbf{TA} & IA & \cellcolor{lightblue}TA & 
         \textbf{IA} & \cellcolor{lightblue}\textbf{TA} & 
         \textbf{IA} & \cellcolor{lightblue}\textbf{TA} & 
         \textbf{IA} & \cellcolor{lightblue}\textbf{TA} & 
         \textbf{IA} & \cellcolor{lightblue}\textbf{TA} & 
         \textbf{IA} & \cellcolor{lightblue}\textbf{TA} & 
         \textbf{IA} & \cellcolor{lightblue}\textbf{TA} & 
         \textbf{IA} & \cellcolor{lightblue}\textbf{TA} & 
         \textbf{IA} & \cellcolor{lightblue}\textbf{TA} & 
         \textbf{IA} & \cellcolor{lightblue}\textbf{TA} \\
         \hline
         
         \cellcolor{lightyellow} & \multicolumn{23}{c}{\textbf{Previous frameworks}}\\
         \cellcolor{lightyellow} & \textbf{Finetuning} & 
         77.6 & \cellcolor{lightblue}64.4 & 
         82.2 & \cellcolor{lightblue}74.6 & 
         79.0 & \cellcolor{lightblue}69.4 & 
         77.6 & \cellcolor{lightblue}68.6 & 
         79.6 & \cellcolor{lightblue}75.0 & 
         62.9 & \cellcolor{lightblue}70.0 & 
         71.5 & \cellcolor{lightblue}\underline{76.7} & 
         53.7 & \cellcolor{lightblue}69.2 & 
         81.4 & \cellcolor{lightblue}65.4 & 
         72.1 & \cellcolor{lightblue}67.2 & 
         73.7 & \cellcolor{lightblue}70.0 \\
         
         \cellcolor{lightyellow} & \textbf{EWC} \cite{ewc} & 
         78.7 & \cellcolor{lightblue}67.1 & 
         83.8 & \cellcolor{lightblue}77.5 & 
         80.4 & \cellcolor{lightblue}72.7 & 
         80.3 & \cellcolor{lightblue}77.9 & 
         80.7 & \cellcolor{lightblue}76.7 & 
         64.0 & \cellcolor{lightblue}72.3 & 
         76.5 & \cellcolor{lightblue}74.2 & 
         57.1 & \cellcolor{lightblue}72.0 & 
         \textbf{84.4} & \cellcolor{lightblue}66.0 & 
         73.1 & \cellcolor{lightblue}70.4 & 
         75.9 & \cellcolor{lightblue}72.7 \\
         
         \cellcolor{lightyellow} & \textbf{LWF} \cite{lwf} & 
         80.4 & \cellcolor{lightblue}70.8 & 
         79.7 & \cellcolor{lightblue}75.2 & 
         80.9 & \cellcolor{lightblue}71.0 &  
         77.4 & \cellcolor{lightblue}77.4 & 
         80.9 & \cellcolor{lightblue}76.0 & 
         61.8 & \cellcolor{lightblue}71.7 & 
         73.2 & \cellcolor{lightblue}76.3 & 
         53.5 & \cellcolor{lightblue}72.9 & 
         78.1 & \cellcolor{lightblue}72.5 &
         74.7 & \cellcolor{lightblue}70.0 &
         74.1 & \cellcolor{lightblue}73.4 \\
         
         \cellcolor{lightyellow} & \textbf{CLoRA} \cite{Smith2024ContinualDiffusion} & 
         83.2 & \cellcolor{lightblue}69.4 & 
         83.4 & \cellcolor{lightblue}78.0 & 
         81.1 & \cellcolor{lightblue}74.1 & 
         80.6 & \cellcolor{lightblue}78.8 & 
         84.9 & \cellcolor{lightblue}76.4 & 
         66.3 & \cellcolor{lightblue}69.6 & 
         76.2 & \cellcolor{lightblue}\underline{76.7} & 
         58.1 & \cellcolor{lightblue}73.9 & 
         83.0 & \cellcolor{lightblue}69.0 & 
         72.1 & \cellcolor{lightblue}71.8 & 
         76.9 & \cellcolor{lightblue}73.6 \\
         
         \cellcolor{lightyellow} & \textbf{L2DM} \cite{l2dm} & 
         78.7 & \cellcolor{lightblue}68.6 & 
         86.3 & \cellcolor{lightblue}79.5 & 
         76.6 & \cellcolor{lightblue}70.1 & 
         80.7 & \cellcolor{lightblue}73.0 & 
         86.8 & \cellcolor{lightblue}76.7 & 
         70.8 & \cellcolor{lightblue}67.7 & 
         70.0 & \cellcolor{lightblue}75.9 & 
         59.3 & \cellcolor{lightblue}74.1 &
         77.7 & \cellcolor{lightblue}71.8 & 
         74.1 & \cellcolor{lightblue}69.4 & 
         76.1 & \cellcolor{lightblue}72.7\\
         \cline{2-24}
         \cellcolor{lightyellow} & \multicolumn{23}{c}{\textbf{On the effectiveness of concept aggregation}}\\

         \cellcolor{lightyellow} & \shortstack{\textbf{Ours} w/o \\ guidance}  & 
          82.4 & \cellcolor{lightblue}\underline{76.1} & 
          83.9& \cellcolor{lightblue}\underline{82.5} & 
          80.9& \cellcolor{lightblue}74.9 & 
          78.8& \cellcolor{lightblue}\underline{82.5} & 
          84.6& \cellcolor{lightblue}\underline{79.8} & 
          66.0& \cellcolor{lightblue}\underline{74.2} & 
          82.9& \cellcolor{lightblue}73.7 & 
          59.8& \cellcolor{lightblue}76.3 &
          81.8& \cellcolor{lightblue}73.4 & 
          75.6& \cellcolor{lightblue}71.6 & 
          77.7& \cellcolor{lightblue}\underline{76.5}\\

          \cellcolor{lightyellow} & \shortstack{\textbf{Ours} with \\ Cosine-guidance}  & 
          81.8 & \cellcolor{lightblue}74.8 & 
          84.6 & \cellcolor{lightblue}82.2 & 
          80.1 & \cellcolor{lightblue}\underline{75.9} & 
          77.9 & \cellcolor{lightblue}81.3 & 
          85.2 & \cellcolor{lightblue}79.2 & 
          \textbf{76.7} & \cellcolor{lightblue}72.0 & 
          83.9 & \cellcolor{lightblue}72.3 & 
          59.3 & \cellcolor{lightblue}77.1 &
          82.2 & \cellcolor{lightblue}73.4 & 
          \textbf{76.7} & \cellcolor{lightblue}\underline{72.2} & 
          78.8 & \cellcolor{lightblue}76.0\\
         \cline{2-24}
         
         \cellcolor{lightyellow} & \multicolumn{23}{c}{\textbf{Comparison against SOTA framework}}\\
         \cellcolor{lightyellow} & \textbf{CIDM}\cite{cidm} & 
         83.6 & \cellcolor{lightblue}75.3 & 
         \textbf{86.4} & \cellcolor{lightblue}78.1 & 
         82.9 & \cellcolor{lightblue}74.0 & 
         80.8 & \cellcolor{lightblue}81.1 & 
         86.5 & \cellcolor{lightblue}78.2 & 
         69.5 & \cellcolor{lightblue}70.1 & 
         73.7 & \cellcolor{lightblue}74.7 & 
         56.9 & \cellcolor{lightblue}74.3 & 
         82.4 & \cellcolor{lightblue}73.5 & 
         75.9 & \cellcolor{lightblue}70.2 & 
         78.0 & \cellcolor{lightblue}74.8 \\
         
         \cellcolor{lightyellow} & \shortstack{\textbf{Ours} with \\ proxy-guidance} &
         \textbf{84.4} & \cellcolor{lightblue}74.0 & 
         86.1 & \cellcolor{lightblue}81.6 & 
         \textbf{84.4} & \cellcolor{lightblue}70.1 & 
         \textbf{82.2} & \cellcolor{lightblue}81.1 & 
         \textbf{87.2} & \cellcolor{lightblue}78.0 & 
         69.3 & \cellcolor{lightblue}72.9 & 
         \textbf{85.3} & \cellcolor{lightblue}70.4 & 
         \textbf{60.7} & \cellcolor{lightblue}\underline{77.6} & 
         82.1 & \cellcolor{lightblue}\underline{76.6} & 
         76.5 & \cellcolor{lightblue}71.6 & 
         \textbf{79.8} & \cellcolor{lightblue}\underline{75.4} \\
        
        \rowcolor{lightgray} \multirow{-13}{*}{\cellcolor{lightyellow}\rotatebox{90}{\textbf{CIFC \cite{cidm}}}}  &
         \textbf{$\Delta$} & +0.8 & -1.3 & 
          -0.3 & +3.5 & 
          +1.5 & -4.0 & 
          +1.4 & +0.0 & 
          +0.7 & -0.2 &
          -1.5 & +0.6 & 
          +9.2 & -6.3 & 
          +1.4 & +3.3 & 
          -2.3 & +3.1 & 
          +0.6 & -0.2 & 
          +1.8 & +0.6 \\
         \hline
         \hline
         
          \cellcolor{lightgreen} & \multicolumn{23}{c}{\textbf{Comparison against SOTA framework}}\\
          
         \cellcolor{lightgreen} & \textbf{CIDM}\cite{cidm} & 
          74.2& \cellcolor{lightblue}\underline{59.0} & 
          67.6& \cellcolor{lightblue}60.4 & 
          73.0& \cellcolor{lightblue}59.4 & 
          75.4& \cellcolor{lightblue}58.2 & 
          70.1& \cellcolor{lightblue}58.0 & 
          78.1& \cellcolor{lightblue}59.0 & 
          81.0& \cellcolor{lightblue}58.4 & 
          70.0& \cellcolor{lightblue}58.0 & 
          70.1& \cellcolor{lightblue}62.1 & 
          73.0& \cellcolor{lightblue}55.4 & 
          73.3& \cellcolor{lightblue}58.8 \\
         
         \cellcolor{lightgreen} & \shortstack{\textbf{Ours} with \\ proxy-guidance} &
         \textbf{76.4} & \cellcolor{lightblue}58.9 & 
         \textbf{72.7} & \cellcolor{lightblue}\underline{60.7} & 
         \textbf{73.1} & \cellcolor{lightblue}\underline{60.0} & 
         \textbf{77.2} & \cellcolor{lightblue}\underline{63.0} & 
         \textbf{78.1} & \cellcolor{lightblue}\underline{61.5} & 
         \textbf{80.0} & \cellcolor{lightblue}\underline{60.9} & 
         \textbf{83.4} & \cellcolor{lightblue}\underline{60.6} & 
         \textbf{72.0} & \cellcolor{lightblue}\underline{63.1} & 
         \textbf{73.1} & \cellcolor{lightblue}\underline{62.7} & 
         \textbf{74.5} & \cellcolor{lightblue}\underline{57.9} & 
         \textbf{76.1} & \cellcolor{lightblue}\underline{60.9} \\
        
        \rowcolor{lightgray} \multirow{-4}{*}{\cellcolor{lightgreen}\rotatebox{90}{\textbf{CelebA}\cite{celeba}}}  &
         \textbf{$\Delta$} & 
         +2.2 & -0.1 & 
         +5.1 & +0.3 & 
         +0.1 & +0.6 & 
         +1.8 & +4.8 & 
         +8.0 & +3.5 &
         +1.9 & +1.9 & 
         +2.4 & +2.2 & 
         +2.0 & +5.1 & 
         +3.0 & +0.6 & 
         +1.5 & +2.5 & 
         +2.8 & +2.1\\
         \hline
         \hline

         \cellcolor{lightorange} & \multicolumn{23}{c}{\textbf{Comparison against SOTA framework}}\\
          
         \cellcolor{lightorange} & \textbf{CIDM}\cite{cidm} & 
          80.8& \cellcolor{lightblue}74.1 & 
          71.4& \cellcolor{lightblue}\underline{77.9} & 
          81.3& \cellcolor{lightblue}\underline{81.7} & 
          83.0& \cellcolor{lightblue}82.6 & 
          79.2& \cellcolor{lightblue}\underline{83.7} & 
          84.5& \cellcolor{lightblue}69.6 & 
          76.7& \cellcolor{lightblue}72.3 & 
          87.5& \cellcolor{lightblue}73.7 & 
          82.2& \cellcolor{lightblue}79.9 & 
          85.0& \cellcolor{lightblue}82.8 & 
          81.2& \cellcolor{lightblue}78.5 \\
         
         \cellcolor{lightorange} & \shortstack{\textbf{Ours} with \\ proxy-guidance} &
         \textbf{83.3} & \cellcolor{lightblue}\underline{74.8} & 
         \textbf{75.7} & \cellcolor{lightblue}73.1 & 
         \textbf{82.1}& \cellcolor{lightblue}79.8 & 
         \textbf{84.4}& \cellcolor{lightblue}\underline{83.3} & 
         \textbf{83.3}& \cellcolor{lightblue}\underline{83.7} & 
         \textbf{84.8}& \cellcolor{lightblue}\underline{70.2} & 
         \textbf{78.7}& \cellcolor{lightblue}\underline{73.9} & 
         \textbf{88.1}& \cellcolor{lightblue}\underline{77.5} & 
         \textbf{83.3}& \cellcolor{lightblue}\underline{80.2} & 
         \textbf{92.6}& \cellcolor{lightblue}\underline{86.8} & 
         \textbf{83.7} & \cellcolor{lightblue}\underline{79.0} \\
        
        \rowcolor{lightgray} \multirow{-4}{*}{\cellcolor{lightorange}\rotatebox{90}{\textbf{ImageNet}\cite{imagenet}}}  &
         \textbf{$\Delta$} & 
         +2.5 & +0.7 & 
         +4.3 & -4.8 & 
         +0.8 & -1.9 & 
         +1.4 & +0.7 & 
         +4.1 & +0.0 &
         +0.3 & +0.6 & 
         +2.0 & +1.6 & 
         +0.6 & +3.8 & 
         +1.1 & +0.3 & 
         +7.6 & +4.0 & 
         +2.5 & +0.5 \\
         \hline
         \hline
    \end{tabular}
    }
    \caption{\textbf{Quantitative Analysis.} We compare our method, denoted by ``Ours with proxy-guidance" against SOTA frameworks, observing an improvement in both IA and TA scores in at least 6 out of 10 concepts. Further experimentation with uncontrolled (``Ours w/o guidance") and cosine-similarity (``Ours with Cosine-guidance") based inter-concept interactions showcase the need for proxy-guidance on the interactions. IA and TA refer to CLIP \cite{clip} Image Alignment and Text Alignment scores respectively.}
    \label{tab:main_exp}
\end{table*}

\section{Experiments}
\noindent We train FL2T on three datasets - CIFC \cite{cidm}, CelebA \cite{celeba} and ImageNet \cite{imagenet}, each consisting of ten distinct concepts with 3-5 text-image pairs. We follow the evaluation strategy defined in \cite{cidm}. Specifically, twenty evaluation prompts are introduced for each of the ten concepts, and fifty images are generated per prompt. More information on datasets, hyperparameters and evaluation metrics has been provided in the Appendix.


\subsection{Ablation Studies}
This section analyzes the effect of our proxy-guided concept aggregation module on the variation in the number of LoRA parameters, number of transformer layers, and number of reference images.

\textbf{On the number of reference images.}
We further compare CIDM \cite{cidm} and FL2T (Ours) by imposing a constraint on the number of reference images while evaluating both models on the same prompts as the original dataset, as shown in Fig. \ref{fig:ablation_ref_images_lora}(a). As expected, the performance of both models improves linearly with an increasing number of reference images. However, FL2T consistently outperforms CIDM across all cases and achieves performance comparable to CIDM on the original dataset (\(\geq4\) images) with only three reference images, highlighting the efficiency and effectiveness of our approach. \textit{FL2T performs better than SOTA models with fewer reference images.}

\begin{figure}[!ht]
\centering
\begin{tikzpicture}
\begin{groupplot}[
  group style={
    group size=2 by 1,
    horizontal sep=0.8cm,
  },
  width=0.28\textwidth,
  height=0.26\textwidth,
  xmin=1, xmax=7,
  ymin=70, ymax=81,
  grid=both,
  grid style={line width=.1pt, draw=gray!10},
  major grid style={line width=.2pt,draw=gray!50},
  minor tick num=4,
  tick label style={font=\scriptsize},
]

\nextgroupplot[xlabel={\shortstack{(a) Number of \\ reference images}}, xmin = 1, xmax = 4, ymin =72, ymax =81,legend to name=zelda, legend columns =4, legend style={
font = \scriptsize,
    anchor=west
  }, font=\bfseries]
\addplot[mark=*, blue, dashed, line width=1pt] table[x=Ref, y=IA] {
Ref IA
1 73.2
2 74.6
3 76.4
4 78.0 
};
\addlegendentry{CIDM (IA)}
\addplot[mark=square*, red, dashed, line width=1pt] table[x=Ref, y=TA] {
Ref TA
1 77.0
2 76.7
3 75.9
4 74.8
};
\addlegendentry{CIDM (TA)}

\addplot[mark=triangle*, orange, line width=1.5pt] table[x=Ref, y=IA] {
Ref IA
1 76.3
2 76.3
3 78.0
4 79.8 
};
\addlegendentry{FL2T (IA)}
\addplot[mark=diamond*, violet, line width=1.5pt] table[x=Ref, y=TA] {
Ref TA
1 76.4
2 76.3
3 76.4
4 75.4
};
\addlegendentry{FL2T (TA)}



\nextgroupplot[xlabel={(b) LoRA rank}, ymin = 74, ymax = 81, font=\bfseries, xmin=3, xmax=6]
\addplot[mark=*, blue, dashed, line width=1.5pt, ] table[x=Ref, y=IA] {
Ref IA
3 77.1
4 78.0
5 77.8
6 77.9
};
\addplot[mark=square*, red, dashed, line width=1.5pt] table[x=Ref, y=TA] {
Ref TA
3 75.3
4 74.8
5 75.4
6 75.6
};

\addplot[mark=triangle*, orange, line width=1.5pt] table[x=Ref, y=IA] {
Ref IA
3 79.1
4 79.8
5 80.1
6 78.7
};
\addplot[mark=diamond*, violet, line width=1.5pt] table[x=Ref, y=TA] {
Ref TA
3 75.6
4 75.4
5 75.5
6 76.0
};

\end{groupplot}
\path (group c1r1.south west) -- (group c2r1.south east) coordinate[midway] (centerbelow);
\node[anchor=north] at (centerbelow |- current bounding box.south) {\pgfplotslegendfromname{zelda}};
\end{tikzpicture}
\caption{\textbf{Ablation studies.} We compare the performance of FL2T against CIDM: with a constraint on the (a) number of reference images, and (b) on the LoRA rank. In each case, FL2T exhibits superior performance.}
\label{fig:ablation_ref_images_lora}
\end{figure}
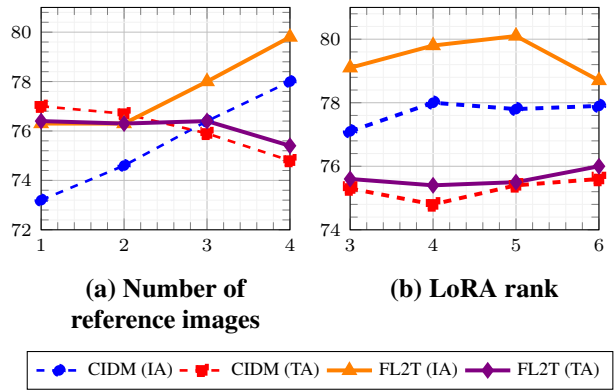

\textbf{Number of transformer decoder layers.} As shown in Table \ref{tab:ablation_tf_layers}, the proxy embeddings achieve optimal inter-concept semantic representation with two transformer layers. This can be attributed to the tendency of excessive attention layers to induce rank collapse, leading to a rank-1 matrix and reduced representational diversity \cite{rank_collapse}. On the other hand, a single transformer layer is too small to capture feature relations. \textit{FL2T achieves maximal performance with two transformer decoder layers.}

\begin{table}[!h]
    \centering
    \resizebox{0.45\textwidth}{!}{
    \begin{tabular}{c|cc|cc|cc}
        \hline
        \hline
        \multirow{2}{*}{\shortstack{\textbf{Number} \\ \textbf{of layers}}} & \multicolumn{2}{|c|}{\textbf{V1 - V5}} & \multicolumn{2}{|c|}{\textbf{V6 - V10}} & \multicolumn{2}{c}{\textbf{Avg.}} \\
        \cline{2-7}
        
        & \textbf{IA} ($\uparrow$) 
         & \cellcolor{lightblue}\textbf{TA} ($\uparrow$)  
         & \textbf{IA} ($\uparrow$) 
         & \cellcolor{lightblue}\textbf{TA} ($\uparrow$) 
         & \textbf{IA} ($\uparrow$) 
         & \cellcolor{lightblue}\textbf{TA} ($\uparrow$)\\
         \hline
         
         1 
         & 83.5
         &\cellcolor{lightblue}77.3
         & 73.4 
         &\cellcolor{lightblue}\underline{74.1}
         & 78.5 
         & \cellcolor{lightblue}\underline{75.7}\\
         
         2 
         & \textbf{84.9} 
         &\cellcolor{lightblue}77.0
         & \textbf{74.8}
         &\cellcolor{lightblue}73.8
         &\textbf{79.8} 
         & \cellcolor{lightblue}75.4\\
         
         3 
         & \textbf{84.9} 
         &\cellcolor{lightblue}76.8
         & 74.0 
         &\cellcolor{lightblue}73.7
         &79.4 
         & \cellcolor{lightblue}75.3\\
         
         4 
         & 83.7 
         &\cellcolor{lightblue}\underline{77.6}
         & 73.4
         &\cellcolor{lightblue}72.7
         &78.6 
         & \cellcolor{lightblue}75.2\\
         \hline
         \shortstack{\textbf{Number} \\ \textbf{of layers}} & \textbf{IMS} ($\uparrow$) 
         & \cellcolor{lightblue}\textbf{FID} ($\downarrow$)  
         & \textbf{IMS} ($\uparrow$) 
         & \cellcolor{lightblue}\textbf{FID} ($\downarrow$) 
         & \textbf{IMS} ($\uparrow$) 
         & \cellcolor{lightblue}\textbf{FID} ($\downarrow$)\\
         \hline
         1 & 75.3
         &\cellcolor{lightblue}139.8
         & 63.4
         &\cellcolor{lightblue}262.7
         & 69.4
         &\cellcolor{lightblue}201.2\\

         2 & \textbf{78.2}
         &\cellcolor{lightblue}133.3
         & \textbf{64.4}
         &\cellcolor{lightblue}262.7
        & \textbf{71.3}
         &\cellcolor{lightblue}198.0\\

         3 & 77.3
         &\cellcolor{lightblue}\underline{132.2}
         & 63.7
         &\cellcolor{lightblue}\underline{261.0}
         & 70.5
         &\cellcolor{lightblue}\underline{196.6}\\

         4 & 76.5
         &\cellcolor{lightblue}139.3
         & 62.9
         &\cellcolor{lightblue}265.2
         & 69.7
         &\cellcolor{lightblue}202.3 \\
         \hline
         \hline
    \end{tabular}}
    \caption{Comparison of model performances based on the number of transformer layers in the concept aggregation module. The best IA and IMS scores have been denoted in \textbf{bold}, and TA and FID values have been \underline{underlined}.}
    \label{tab:ablation_tf_layers}
\end{table}

\textbf{Variation of LoRA parameters.} Fig. \ref{fig:ablation_ref_images_lora}(b) and Table \ref{tab:lora} illustrate the impact of varying LoRA parameters on model performance. Across all ranks of the LoRA matrix, FL2T (Ours) consistently outperforms CIDM \cite{cidm} across all scores. Notably, our framework achieves superior performance compared to CIDM (rank = 4) while utilizing a lower rank or 25\% fewer parameters. \textit{FL2T exhibits greater parameter efficiency by outperforming SOTA models while using a lower LoRA rank.}

\textbf{On the number of concepts.} As shown in Fig. \ref{fig:scalability}, FL2T (Ours) exhibits impressive scalability compared to CIDM \cite{cidm}. This further consolidates our hypothesis that positive inter-concept interactions can help improve generative capabilities. \textit{FL2T demonstrates impressive scalability.}

\pgfplotsset{compat=1.15}
\begin{figure}[!ht]
  \centering
  \begin{tikzpicture}
    \begin{axis}[
      ybar,
      bar width=8pt,
      width=0.5\textwidth,
      height=0.45\textwidth,
      xlabel={Concept count range},
      symbolic x coords={1-10, 11-20, 21-30},
      xtick=data,
      ymin=71, ymax=100,
      grid=both,
      font=\bfseries,
      legend pos=north west,
      legend style={font=\tiny, legend columns=3},
      enlarge x limits=0.25,
      ymajorgrids=true,
      nodes near coords,
      every node near coord/.append style={font=\tiny, yshift=8pt, xshift =4.5pt, text=black, rotate = 90}
    ]
      \addplot[style={red, fill=red!30}] coordinates {
        (1-10, 79.6)
        (11-20, 80.9)
        (21-30, 80.9)
      };
      \addlegendentry{CIDM (IA)}

      \addplot[style={blue, fill=blue!30}] coordinates {
        (1-10, 77.1)
        (11-20, 74.5)
        (21-30, 77.3)
      };
      \addlegendentry{CIDM (TA)}

      \addplot[style={orange, fill=orange!30}] coordinates {
      (1-10, 82.9)
        (11-20, 83.2)
        (21-30, 84.5)
    };
    \addlegendentry{FL2T (IA)}

    \addplot[style={violet, fill=violet!30}] coordinates {
    (1-10, 78.2)
        (11-20, 75.4)
        (21-30, 78.1)
    };
    \addlegendentry{FL2T (TA)}
    
    \addplot[style={green, fill=green!30}] coordinates {
    (1-10, 74.7)
        (11-20, 72.5)
        (21-30, 72.9)
    };
    \addlegendentry{CIDM (IMS)}
    \addplot[style={teal, fill=teal!30}] coordinates {
    (1-10, 76.9)
        (11-20, 75.4)
        (21-30, 77.8)
    };
    \addlegendentry{FL2T (IMS)}
    \end{axis}

    \begin{axis}[
      width=0.5\textwidth,
      height=0.45\textwidth,
      axis y line*=right,
      axis x line=none,
      font = \bfseries,
      ymin=0, ymax=200, 
      symbolic x coords={1-10, 11-20, 21-30},
      xtick=data,
      legend style={font=\tiny, legend columns=1, at={(0.98,0.95)}, anchor=north east},
    ]
      \addplot[mark=square*, violet, line width =1pt] coordinates {
      (1-10, 123.0)
        (11-20, 136.9)
        (21-30, 127.9)
      };
      \addlegendentry{CIDM (FID)}

      \addplot[mark=triangle*, brown, line width =1pt] coordinates {
        (1-10, 109.0)
        (11-20, 122.3)
        (21-30, 109.7)
      };
      \addlegendentry{FL2T (FID)}
    \end{axis}
  \end{tikzpicture}
  \caption{\textbf{Scalability.} FL2T shows impressive scalability achieving higher average CLIP and IMS scores (left axis) and lower FID values (right axis) over three concept ranges (1–10, 11–20, 21–30), compared to CIDM \cite{cidm} on the ImageNet dataset \cite{imagenet}. We limit the scalability analysis to 30 concepts due to CLIP tokenizer's limit on 77 new tokens.}
  \label{fig:scalability}
\end{figure}
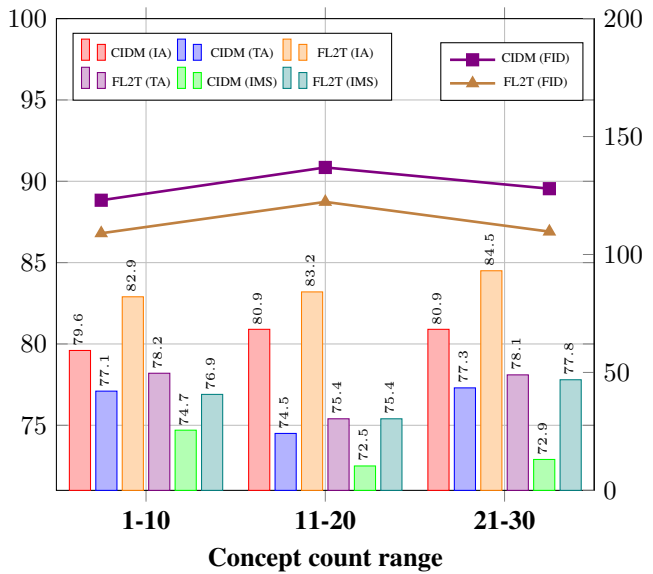


\subsection{Quantitative Analysis}

As shown in Table \ref{tab:main_exp}, we begin our experiments by evaluating our framework under the condition where 9 out of 10 concepts are exposed for a task \( g \), referred to as ``Ours w/o guidance." We then introduce cosine similarity-based guidance to regulate cross-concept interactions using the concept embeddings learned in Step 1 (Section \ref{subsec:step1}), which is denoted as "Ours with Cosine-guidance." This adjustment leads to a notable improvement of +0.8 in the average Image Alignment (IA) scores, with only a negligible decrease in the average Text Alignment (TA) scores. Notably, Concept V6 experiences a substantial 10-point improvement in IA scores. Inspired by this observation, we designed a module that controls cross-concept interactions using proxy embeddings. The results demonstrate that our framework outperforms CIDM \cite{cidm} in at least 6 out of 10 concepts across both IA and TA scores. The smallest gain observed is +0.6, while the largest gain is +9.2. Furthermore, our framework effectively supports the paradigm of personalization and mitigates catastrophic forgetting, as evidenced by a +1.8 improvement in average IA scores and +0.6 in average TA scores. We would also like to highlight the fact that FL2T showcases superior performance in terms of LoRA parameter efficiency (Fig. \ref{fig:ablation_ref_images_lora} (b)) and fewer reference images required for optimal performance (Fig. \ref{fig:ablation_ref_images_lora}(a)).

\begin{figure*}[!h]
    \centering
    \includegraphics[width=\textwidth]{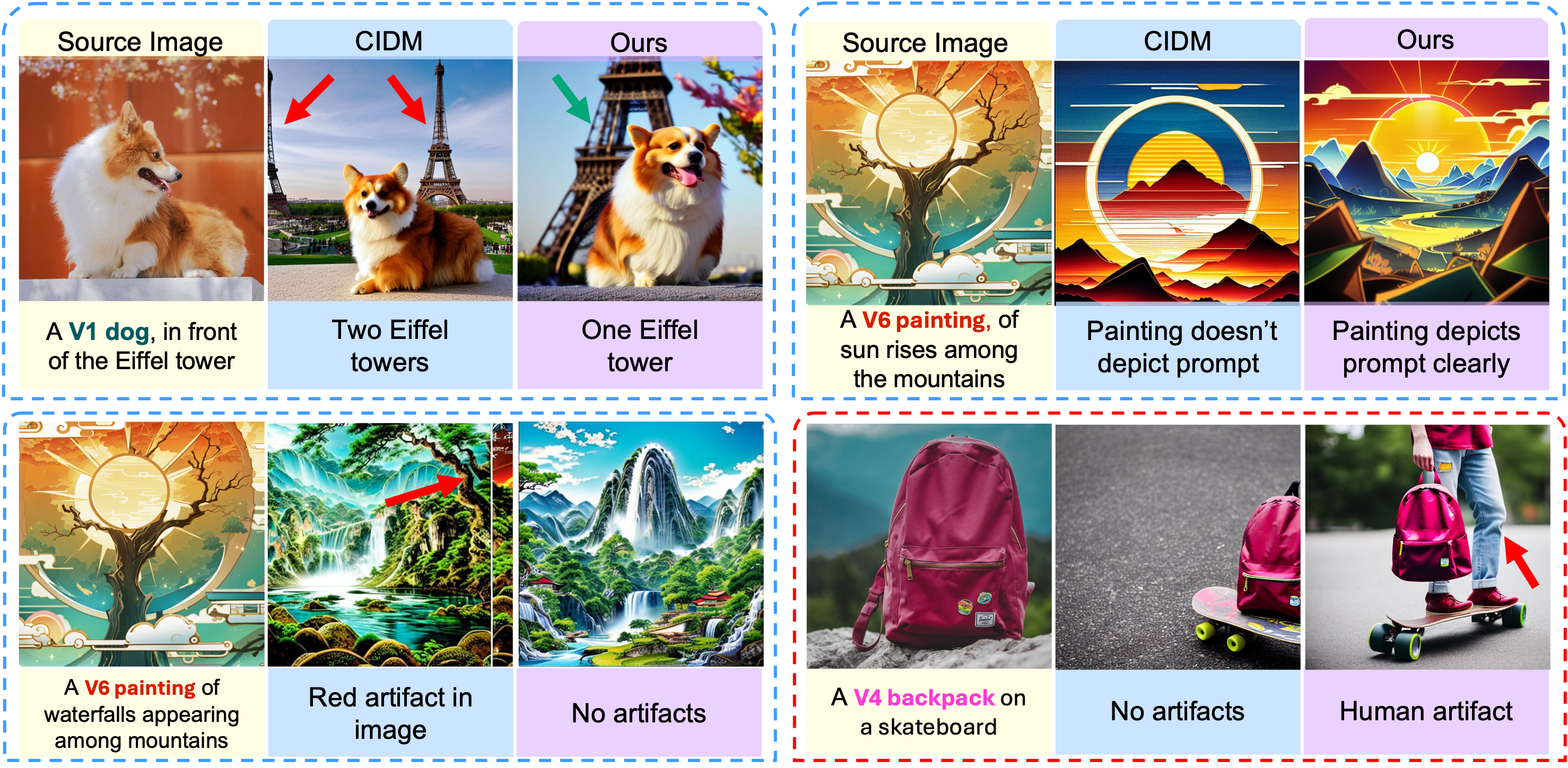}
    \caption{Qualitative Analysis on the CIFC dataset \cite{cidm}. We compare the synthesized images by CIDM \cite{cidm} and FL2T (Ours). The images are generated with a source image and an associated text prompt as input. Images with red and green arrows indicate regions of undesirable and desirable qualities, and their reasons are stated below each image. }
    \label{fig:qual_results1}
\end{figure*}

\subsection{Qualitative Analysis}

As demonstrated in Fig. \ref{fig:qual_results1}, our proposed framework excels in preserving identity, evidenced by the V7 concept generation task. It is also adept at controlling the number of objects in the generated images, as shown in V1 dog and V8 drawing concept images (Appendix Fig. \ref{fig:qual_results2}). Lastly, we also observe a substantial improvement in correcting extraneous features and background generation capabilities. We attribute this improvement to the effective cross-concept interactions, which lead to the generation of more robust concept embeddings. Furthermore, the use of proxy-guidance aids in achieving smoother knowledge transfer and retention across concepts, as illustrated by the enhanced quality of the generated images.

\subsection{Computational Cost Analysis}
We evaluate the computational efficiency of our method by comparing the parameter count, time complexity, memory consumption, and inference latency against CIDM \cite{cidm}. FL2T (Ours) has a higher parameter count (52.6M) compared to CIDM \cite{cidm} (38.4M), due to its proxy-guided inter-concept interaction architecture. Our method exhibits improved performance with a time complexity of $\mathcal{O}(2G)$, compared to CIDM’s $\mathcal{O}(G)$, where $G$ is the number of concepts. Both methods exhibit identical memory consumption of 18.7 GB during inference. Additionally, the inference times for processing 50 images are comparable: 1423 seconds for CIDM and 1436 seconds for FL2T, highlighting that the increased parameter count does not lead to a substantial increase in runtime.

\section{Limitations}
Despite its strengths, FL2T exhibits a few limitations. First, it occasionally fails to accurately generate the prompted context, which can be attributed to weak priors for certain concepts or low co-occurrence between the subject and context in the training data (e.g., as seen in row 2, column 6 of Fig. \ref{fig:qual_results1}. Second, the model may overfit to real images when the prompt closely mirrors conditions seen during training, limiting its generalization. Lastly, FL2T’s ability to learn subject identities varies significantly - while common subjects such as dogs and cats are modeled effectively, rare or less-represented subjects remain challenging.

\section{Conclusion}
Our work tackles the limitations of existing CDMs, particularly their susceptibility to catastrophic forgetting when learning new concepts. Unlike prior approaches that assume a fixed order of concept learning and neglect inter-concept interactions, we introduce Forget Less by Learning Together (FL2T) — a novel framework that enables order-agnostic concept learning while mitigating forgetting. At the core of FL2T is an inter-concept learning mechanism which leverages the permutation invariance property of transformers. In this mechanism, proxy embeddings guide feature selection across concepts, ensuring both knowledge retention and efficient adaptation. By leveraging inter-concept guidance, our method preserves previously learned concepts while seamlessly integrating new ones.
{\small
\bibliographystyle{ieee_fullname}
\bibliography{egbib}
}

\newpage
\appendix

\definecolor{lightred}{RGB}{255, 204, 204}
\definecolor{lightblue}{RGB}{204, 229, 255}
\definecolor{lightgreen}{RGB}{204, 255, 204}
\definecolor{lightyellow}{RGB}{255, 255, 204}
\definecolor{lightgray}{gray}{0.9}
\definecolor{lightorange}{RGB}{255, 230, 153}


\crefname{section}{Sec.}{Secs.}
\Crefname{section}{Section}{Sections}
\Crefname{table}{Table}{Tables}
\crefname{table}{Tab.}{Tabs.}



\def\wacvPaperID{1594} 
\def\confName{WACV}
\def\confYear{2024}

\appendix
\section{Technical Appendices and Supplementary Material}

\subsection{Ablation study}
\noindent We conduct an ablation study comparing FL2T (Ours) and CIDM \cite{cidm} under varying settings. First, constraining the number of reference images (Table \ref{tab:ref_images}) reveals that while both models improve with more references, FL2T consistently outperforms CIDM and matches its original performance ($\geq 4$ images) using only three, demonstrating greater efficiency. Second, as shown in Table \ref{tab:ablation_tf_layers} proxy embeddings attain optimal semantic representation with two transformer decoder layers, as too few layers under-express feature relations and too many risk rank collapse \cite{rank_collapse}. Lastly, FL2T maintains superior scores across all LoRA ranks (Table \ref{tab:lora}), outperforming CIDM (rank = 4) even at a lower rank  with 25\% fewer parameters, highlighting the parameter efficiency of our approach.

\begin{table}[!h]
    \centering
    \renewcommand{\arraystretch}{1.2}
    \resizebox{0.45\textwidth}{!}{
    \begin{tabular}{c|c|cc|cc|cc}
        \hline
        \hline
         \multirow{2}{*}{\shortstack{\textbf{Number of} \\ \textbf{reference} \\ \textbf{images}}} & \multirow{2}{*}{\textbf{Methods}} & \multicolumn{2}{c}{\textbf{V1 - V5}} & \multicolumn{2}{|c|}{\textbf{V6 - V10}} & \multicolumn{2}{c}{\textbf{Avg.}} \\
         \cline{3-8}
         
         & & \textbf{IA} ($\uparrow$) 
         & \cellcolor{lightblue}\textbf{TA} ($\uparrow$) 
         & \textbf{IA} ($\uparrow$) 
         & \cellcolor{lightblue}\textbf{TA} ($\uparrow$)  
         & \textbf{IA} ($\uparrow$) 
         & \cellcolor{lightblue}\textbf{TA} ($\uparrow$)\\
         \hline
         
         \multirow{2}{*}{1} & CIDM 
         & 76.6 
         &\cellcolor{lightblue}\underline{79.5} 
         & 69.7
        &\cellcolor{lightblue}\underline{74.5} 
        & 73.2  
        & \cellcolor{lightblue} \underline{77.0} \\
         
          & Ours 
          & \textbf{80.0} 
          &\cellcolor{lightblue}78.7 
          & \textbf{72.5} 
          & \cellcolor{lightblue}74.1 
          &\textbf{76.3}  
          & \cellcolor{lightblue}76.4 \\

          \hline
          
         \multirow{2}{*}{2} & CIDM 
         & 78.1 
         &\cellcolor{lightblue}\underline{79.2} 
         & 71.2
          &\cellcolor{lightblue}\underline{74.2} 
          & 74.6 
          & \cellcolor{lightblue}\underline{76.7}\\

           & Ours
           & \textbf{80.4} 
           &\cellcolor{lightblue}78.7 
           & \textbf{72.1}
           &\cellcolor{lightblue}74.0 
           & \textbf{76.3} 
           & \cellcolor{lightblue} 76.3 \\

          \hline

          \multirow{2}{*}{3} & CIDM 
          & 80.8 
          &\cellcolor{lightblue}78.2 
          & 72.1 
          &\cellcolor{lightblue}73.6 
          & 76.4 
          & \cellcolor{lightblue}75.9 \\
          
           & Ours 
           & \textbf{82.3} 
           &\cellcolor{lightblue} \underline{79.0} 
           & \textbf{73.7}  
           & \cellcolor{lightblue}\underline{73.9}
           & \textbf{78.0}  
           & \cellcolor{lightblue}\underline{76.4}\\

           \hline

          \multirow{2}{*}{$\geq$4} & CIDM 
          & 84.0 
          &\cellcolor{lightblue}\underline{77.3} 
          & 71.7
          &\cellcolor{lightblue}72.6 
          & 78.0 
          & \cellcolor{lightblue}74.8 \\
         
         & Ours 
         & \textbf{84.9} 
         & \cellcolor{lightblue}77.0 
         & \textbf{74.8} 
         & \cellcolor{lightblue}\underline{88.9} 
         & \textbf{79.8} 
         & \cellcolor{lightblue}\underline{75.4}\\
         \hline
         \hline
         
         \shortstack{\textbf{Number of} \\ \textbf{reference}} & \textbf{Methods} & \textbf{IMS} ($\uparrow$) 
         & \cellcolor{lightblue}\textbf{FID} ($\downarrow$) 
         & \textbf{IMS} ($\uparrow$) 
         & \cellcolor{lightblue}\textbf{FID} ($\downarrow$)  
         & \textbf{IMS} ($\uparrow$) 
         & \cellcolor{lightblue}\textbf{FID} ($\downarrow$)\\
         \hline
         \multirow{2}{*}{1} & CIDM
         & 61.7
         & \cellcolor{lightblue}199.0
         & \textbf{55.3}
        & \cellcolor{lightblue} \underline{291.8}
        & 58.5
        & \cellcolor{lightblue}245.4\\

        & Ours
        & \textbf{65.0}
          & \cellcolor{lightblue}\underline{179.2}
          & 54.7
          & \cellcolor{lightblue}292.4
          & \textbf{59.9}
          &\cellcolor{lightblue}\underline{235.8}\\
        \hline

        \multirow{2}{*}{2} & CIDM 
        & 66.5
         & \cellcolor{lightblue} 178.3
         & 57.8
          & \cellcolor{lightblue} 284.9
          & 62.2
          & \cellcolor{lightblue}231.6\\

          & Ours
          & \textbf{69.0}
           & \cellcolor{lightblue}\underline{165.5}
           & \textbf{58.1}
           & \cellcolor{lightblue}\underline{278.8}
           & \textbf{63.6}
           &\cellcolor{lightblue}\underline{222.1}\\

           \hline

          \multirow{2}{*}{3} & CIDM
          & 71.3
          & \cellcolor{lightblue}156.5
          & 58.5
          & \cellcolor{lightblue}285.3
          & 64.9
          & \cellcolor{lightblue}220.9\\

          & Ours
          & \textbf{72.0}
           & \cellcolor{lightblue}\underline{152.9}
           & \textbf{60.3}
           & \cellcolor{lightblue}\underline{272.7}
           & \textbf{66.2}
           &\cellcolor{lightblue} \underline{212.8}\\

           \hline

           \multirow{2}{*}{$\geq$4} & CIDM 
           & 74.0
          & \cellcolor{lightblue}142.0
          & 61.5
          & \cellcolor{lightblue}270.0
          & 67.7
          & \cellcolor{lightblue}206.0\\

          & Ours
          & \textbf{78.2}
         & \cellcolor{lightblue}\underline{133.3}
         & \textbf{64.4}
         & \cellcolor{lightblue}\underline{262.7}
         & \textbf{71.3}
         & \cellcolor{lightblue}\underline{198.0}\\
         \hline
         \hline
    \end{tabular}}
    \caption{Comparison of model performances with different number of reference images. IA and TA refer to CLIP \cite{clip} Image Alignment and Text Alignment scores respectively. The best IA and IMS scores have been denoted in \textbf{bold}, and the TA and FID values have been \underline{underlined}.}
    \label{tab:ref_images}
\end{table}

\begin{table}[!h]
    \centering
    \renewcommand{\arraystretch}{1.2}
    \resizebox{0.45\textwidth}{!}{
    \begin{tabular}{c|c|cc|cc|cc}
        \hline
        \hline
         \multirow{2}{*}{\shortstack{\textbf{LoRA} \\ \textbf{rank}}} & \multirow{2}{*}{\textbf{Methods}} & \multicolumn{2}{c}{\textbf{V1 - V5}} & \multicolumn{2}{|c|}{\textbf{V6 - V10}} & \multicolumn{2}{c}{\textbf{Avg.}} \\
         \cline{3-8}
         
         & & \textbf{IA} ($\uparrow$) 
         & \cellcolor{lightblue}\textbf{TA} ($\uparrow$) 
         & \textbf{IA} ($\uparrow$) 
         & \cellcolor{lightblue}\textbf{TA} ($\uparrow$) 
         & \textbf{IA} ($\uparrow$) 
         & \cellcolor{lightblue}\textbf{TA} ($\uparrow$) \\
         \hline
         
         \multirow{2}{*}{3} & CIDM 
         & 82.3 
         &\cellcolor{lightblue}\underline{78.3}
         & 71.9 
         &\cellcolor{lightblue}72.4
         & 77.1 
         & \cellcolor{lightblue}75.3\\
         
         & Ours 
         & \textbf{84.5} 
         & \cellcolor{lightblue}77.6 
         & \textbf{73.8} 
         & \cellcolor{lightblue}\underline{88.6} 
         & \textbf{79.1} 
         & \cellcolor{lightblue}\underline{75.6}\\
         \hline
         
         \multirow{2}{*}{4} & CIDM 
         & 84.0 
         &\cellcolor{lightblue}\underline{77.3}
         & 71.7
         &\cellcolor{lightblue}72.6 
         & 78.0 
         & \cellcolor{lightblue}74.8\\
         
         & Ours 
         & \textbf{84.9} 
         & \cellcolor{lightblue}77.0
         & \textbf{74.8} 
         & \cellcolor{lightblue}\underline{88.9}
         & \textbf{79.8} 
         & \cellcolor{lightblue}\underline{75.4}\\
         \hline
         
         \multirow{2}{*}{5} & CIDM 
         & 82.5 
         &\cellcolor{lightblue}\underline{78.1}
         & 73.1
         &\cellcolor{lightblue} 72.9
         & 77.8 
         & \cellcolor{lightblue}75.4\\
         
         & Ours 
         & \textbf{85.4} 
         & \cellcolor{lightblue}77.3
         & \textbf{74.7} 
         & \cellcolor{lightblue}\underline{73.7}
         & \textbf{80.1} 
         & \cellcolor{lightblue}\underline{75.5}\\
         \hline
         
         \multirow{2}{*}{6} & CIDM 
         & 82.7 
         &\cellcolor{lightblue}\underline{78.3} 
         & 73.0 
         &\cellcolor{lightblue}73.0
         & 77.9 
         & \cellcolor{lightblue}75.6\\
         
         & Ours 
         & \textbf{84.4} 
         & \cellcolor{lightblue}77.7
         & \textbf{73.1} 
         & \cellcolor{lightblue}\underline{74.2}
         & \textbf{78.7} 
         & \cellcolor{lightblue}\underline{76.0}\\
         \hline
         \hline
         \shortstack{\textbf{LoRA} \\ \textbf{rank}}& \textbf{Methods} & \textbf{IMS} ($\uparrow$) 
         & \cellcolor{lightblue}\textbf{FID} ($\downarrow$) 
         & \textbf{IMS} ($\uparrow$) 
         & \cellcolor{lightblue}\textbf{FID} ($\downarrow$) 
         & \textbf{IMS} ($\uparrow$) 
         & \cellcolor{lightblue}\textbf{FID} ($\downarrow$) \\
         \hline

         \multirow{2}{*}{3} & CIDM
         & 72.6
         & \cellcolor{lightblue}150.3
         & \textbf{62.2}
         & \cellcolor{lightblue}\underline{262.7}
         & 67.4
         & \cellcolor{lightblue}206.5\\

         & Ours 
         & \textbf{76.6}
         & \cellcolor{lightblue}\underline{135.1}
         & 61.8
         & \cellcolor{lightblue}263.9
         & \textbf{69.2}
         & \cellcolor{lightblue}\underline{199.5}\\
         \hline

          \multirow{2}{*}{4} & CIDM
          & 74.0
         & \cellcolor{lightblue}142.0
         & 61.5
         & \cellcolor{lightblue}270.0
         & 67.7
         & \cellcolor{lightblue}206.0\\
         
         & Ours
         & \textbf{78.2}
         & \cellcolor{lightblue}\underline{133.3}
         & \textbf{64.4}
         & \cellcolor{lightblue}\underline{262.7}& \textbf{71.3}
         & \cellcolor{lightblue}\underline{198.0}\\
         \hline
         
         \multirow{2}{*}{5} & CIDM 
         & 73.5
         & \cellcolor{lightblue}145.4
         & 62.4
         & \cellcolor{lightblue}\underline{260.2}
         & 68.0
         & \cellcolor{lightblue}202.8\\
         
         & Ours
         & \textbf{76.8}
         & \cellcolor{lightblue}\underline{132.5}
         & \textbf{63.3}
         & \cellcolor{lightblue}263.7
         & \textbf{70.1}
         & \cellcolor{lightblue}\underline{198.1}\\
         \hline
         
         \multirow{2}{*}{6} & CIDM 
         & 72.6
         & \cellcolor{lightblue}147.1
         & \textbf{61.7}
         & \cellcolor{lightblue}\underline{267.2}
         & 67.1
         & \cellcolor{lightblue}207.2\\
         
         & Ours 
         & \textbf{76.7}
         & \cellcolor{lightblue}\underline{134.8}
         & 60.9
         & \cellcolor{lightblue}273.3
         & \textbf{68.9}
         & \cellcolor{lightblue}\underline{204.0}\\
         \hline
         \hline
    \end{tabular}}
    \caption{Comparison of model performances for different LoRA \cite{lora1, lora2} configurations. IA and TA refer to CLIP \cite{clip} Image Alignment and Text Alignment scores, respectively. The better IA and IMS scores have been denoted in \textbf{bold}, and TA and FID values have been \underline{underlined}.}
    \label{tab:lora}
\end{table}

\subsection{Datasets, Implementation Details and Evaluation Metrics}
\paragraph{Datasets.} We evaluate our approach on three datasets, each selected to ensure strong semantic alignment with our task objectives (Fig. \ref{fig:datasets}). For the CIFC dataset \cite{cidm}, we adopt the benchmark established in the original paper, which features visually rich concept cards designed to challenge models in hallucination detection. For the ImageNet \cite{imagenet} subset, we manually select 3–5 images per class from ten classes, ensuring that each image contains a well-centered, unobstructed primary object. For the CelebA dataset \cite{celeba}, we construct a subset of ten identities, with 3–5 representative images per identity. Images are chosen based on clear frontal facial orientation, uniform lighting, and minimal occlusion to preserve identity consistency. This hand-curated dataset design provides high-quality supervision, which is crucial for minimizing semantic hallucinations and promoting accurate visual-textual alignment. 

For the experiments on varying numbers of reference images (Table \ref{tab:ref_images}), we have used a fixed set of images picked randomly (without human intervention).

\textbf{Implementation Details.} We use Stable Diffusion (SD-1.5) \cite{finetune2} as the pretrained model for all experiments. The training is conducted with a fixed initial learning rate of \(1.0 \times 10^{-3}\) for updating textual embeddings and \(1.0 \times 10^{-4}\) for optimizing the U-Net. Empirically, we set \(\gamma_1 = 0.1\) and \(\gamma_2 = 0.1\) in Eq. \ref{eq:cil}.

\begin{table*}[!t]
    \centering
    \renewcommand{\arraystretch}{1.5}
    \resizebox{\textwidth}{!}{
    \begin{tabular}{cc|cc|cc|cc|cc|cc|cc|cc|cc|cc|cc|cc}
        \hline
        \hline
        & \multirow{2}{*}{\textbf{Methods}} & \multicolumn{2}{|c|}{\textbf{V1}} & \multicolumn{2}{|c|}{\textbf{V2}} & \multicolumn{2}{|c|}{\textbf{V3}} & \multicolumn{2}{|c|}{\textbf{V4}} & \multicolumn{2}{|c|}{\textbf{V5}} & \multicolumn{2}{|c|}{\textbf{V6}} & \multicolumn{2}{|c|}{\textbf{V7}} & \multicolumn{2}{|c|}{\textbf{V8}} & \multicolumn{2}{|c|}{\textbf{V9}} & \multicolumn{2}{|c|}{\textbf{V10}} & \multicolumn{2}{|c}{\textbf{Avg.}} \\
        \cline{3-24}  
         & & \textbf{IMS} ($\uparrow$) & \cellcolor{lightblue}\textbf{FID} ($\downarrow$)  & \textbf{IMS} ($\uparrow$) & \cellcolor{lightblue}\textbf{FID} ($\downarrow$) & \textbf{IMS} ($\uparrow$) & \cellcolor{lightblue}\textbf{FID} ($\downarrow$) & \textbf{IMS} ($\uparrow$) & \cellcolor{lightblue}\textbf{FID} ($\downarrow$) & \textbf{IMS} ($\uparrow$) & \cellcolor{lightblue}\textbf{FID} ($\downarrow$) & \textbf{IMS} ($\uparrow$) & \cellcolor{lightblue}\textbf{FID} ($\downarrow$) & \textbf{IMS} ($\uparrow$) & \cellcolor{lightblue}\textbf{FID} ($\downarrow$) & \textbf{IMS} ($\uparrow$) & \cellcolor{lightblue}\textbf{FID} ($\downarrow$) & \textbf{IMS} ($\uparrow$) & \cellcolor{lightblue}\textbf{FID} ($\downarrow$) & \textbf{IMS} ($\uparrow$) & \cellcolor{lightblue}\textbf{FID} ($\downarrow$) & \textbf{IMS} ($\uparrow$) & \cellcolor{lightblue}\textbf{FID} ($\downarrow$) \\
         \hline

         \cellcolor{lightyellow} & \textbf{CIDM}\cite{cidm} & 
          84.7 & \cellcolor{lightblue}82.2 &
          74.5 & \cellcolor{lightblue}166.1 &
          66.7 & \cellcolor{lightblue}156.7 &
          65.1 & \cellcolor{lightblue}211.9 &
          78.9 & \cellcolor{lightblue}93.2 &
          53.6 & \cellcolor{lightblue}348.4 &
          80.8 & \cellcolor{lightblue}119.0 &
          49.7 & \cellcolor{lightblue}345.6 &
          70.0 & \cellcolor{lightblue}195.6 &
          53.3 & \cellcolor{lightblue}341.5 &
          67.7 & \cellcolor{lightblue} 206.0 \\
         
         \cellcolor{lightyellow} & \shortstack{\textbf{Ours} with \\ proxy-guidance} &
         87.7& \cellcolor{lightblue}71.8 &
         75.1& \cellcolor{lightblue} 169.3 &
         75.6& \cellcolor{lightblue} 145.4 &
         71.3& \cellcolor{lightblue} 202.5&
         81.1& \cellcolor{lightblue} 77.7 &
         55.8& \cellcolor{lightblue} 355.2&
         89.5& \cellcolor{lightblue} 98.4&
         48.3& \cellcolor{lightblue} 353.4&
         69.8& \cellcolor{lightblue} 189.9&
         58.6& \cellcolor{lightblue} 316.4&
         71.3 & \cellcolor{lightblue} 198.0\\
        
        \rowcolor{lightgray} \multirow{-3}{*}{\cellcolor{lightyellow}\rotatebox{90}{\textbf{CIFC \cite{cidm}}}}  &
         \textbf{$\Delta$} & 
           +3.0& -10.4 &  
           +0.6& +3.2 &
           +8.9& -11.3 & 
           +6.2& -8.6 & 
           +2.2& -15.5 &
           +2.2& +6.8 & 
           +8.7& -20.6 & 
           -1.4& +7.8 & 
           -0.2& -5.7 &
           +5.3& -25.1 &
           +3.6 & -8.0 \\
         \hline
         \hline
         
          
         \cellcolor{lightgreen} & \textbf{CIDM}\cite{cidm} & 
          63.9& \cellcolor{lightblue} 233.2 &
          65.6& \cellcolor{lightblue} 219.7&
          60.1& \cellcolor{lightblue} 250.0&
          76.0& \cellcolor{lightblue} 166.8&
          55.3& \cellcolor{lightblue} 246.7&
          61.4& \cellcolor{lightblue} 234.5&
          58.0& \cellcolor{lightblue} 193.4&
          56.1& \cellcolor{lightblue} 273.5&
          51.2& \cellcolor{lightblue} 185.7&
          57.0& \cellcolor{lightblue} 197.0&
          60.5  & \cellcolor{lightblue} 220.0\\
         
         \cellcolor{lightgreen} & \shortstack{\textbf{Ours} with \\ proxy-guidance} &
         64.4& \cellcolor{lightblue}225.8 &
         69.3& \cellcolor{lightblue}212.2 &
         63.5& \cellcolor{lightblue}239.8 &
         70.2& \cellcolor{lightblue}168.9 &
         57.4& \cellcolor{lightblue}188.7 &
         63.9& \cellcolor{lightblue}221.1 &
         58.1&  \cellcolor{lightblue} 218.0&
         59.5&  \cellcolor{lightblue}236.9&
         55.4& \cellcolor{lightblue} 170.4 &
         63.0&  \cellcolor{lightblue} 195.4&
         62.5 & \cellcolor{lightblue}209.0 \\
        
        \rowcolor{lightgray} \multirow{-3}{*}{\cellcolor{lightgreen}\rotatebox{90}{\textbf{CelebA \cite{celeba}}}}  &
         \textbf{$\Delta$} & 
           +0.5& -7.4 &
           +3.7& -7.5&  
           +3.4& -10.2&  
           -5.8& +2.1&  
           +2.1& -58.0& 
           +2.5& -13.4&  
           +0.1& +24.6&  
           +3.4& -36.6&  
           +4.2& -15.3& 
           +6.0& -1.6&  
           +2.0  & -11.0\\
         \hline
         \hline

          
         \cellcolor{lightorange} & \textbf{CIDM}\cite{cidm} & 
          81.6& \cellcolor{lightblue} 129.6 &
          68.3& \cellcolor{lightblue} 113.6 &
          85.4& \cellcolor{lightblue} 75.7 &
          85.0& \cellcolor{lightblue} 85.1 &
          73.5& \cellcolor{lightblue} 91.5 &
          78.4& \cellcolor{lightblue} 86.2 &
          64.9& \cellcolor{lightblue} 135.1&
          79.4& \cellcolor{lightblue} 58.8 &
          80.1& \cellcolor{lightblue} 88.1& 
          87.4& \cellcolor{lightblue} 57.3 &
          78.4 & \cellcolor{lightblue} 92.1 \\
         
         \cellcolor{lightorange} & \shortstack{\textbf{Ours} with \\ proxy-guidance} &
         85.2& \cellcolor{lightblue} 103.1&
         75.2& \cellcolor{lightblue} 107.1 & 
         88.5& \cellcolor{lightblue} 57.1&
         84.1& \cellcolor{lightblue} 79.3&
         85.7& \cellcolor{lightblue} 83.5& 
         84.8& \cellcolor{lightblue} 69.1& 
         75.6& \cellcolor{lightblue} 110.3&
         77.2& \cellcolor{lightblue} 64.3& 
         80.2& \cellcolor{lightblue}79.8&
         95.4& \cellcolor{lightblue}31.6&
         83.2 & \cellcolor{lightblue}78.5\\
        
        \rowcolor{lightgray} \multirow{-3}{*}{\cellcolor{lightorange}\rotatebox{90}{\textbf{INet \cite{imagenet}}}}  &
         \textbf{$\Delta$} & 
           +3.6& -26.5& 
           +7.1& -6.5 &  
           +3.1& -18.6& 
           -0.9& -6.2&  
           +12.2& -8.0& 
           +6.4& -17.1&  
           +10.7& -24.8&  
           -2.2& +5.5&  
           +0.1& -8.3&  
           +8.0& -25.7&  
           +4.8 & -13.6\\
         \hline
         \hline
    \end{tabular}
    }
    \caption{\textbf{Additional Experiments.} FL2T outperforms CIDM across the three datasets on Identity Matching Scores (IMS) \cite{liu2024metacloakpreventingunauthorizedsubjectdriven} and Fretchet Inception Distance (FID) \cite{fid}, effectively showcasing its ability to preserve identities better and improved generation capabilities. }
    \label{tab:addl_exp}
\end{table*}

\textbf{Evaluation Metrics.}
Following the experiments under our problem setting as in Sec \ref{sec:problem}, we evaluate our generated images across 2 metrics - Image Alignment (IA) and Text Alignment (TA). Image Alignment (IA) scores are computed using CLIP \cite{clip} image encoder, comparing the similarity of features between generated images and reference images. Similarly, we utilize the text encoder of CLIP \cite{clip} to evaluate the text-image similarity between the input prompt and synthesized image for the Text Alignment (TA) scores. Additionally, we utilize \textit{Identity Matching Score (IMS)} that measures the semantic closeness of the generated image and reference image \cite{liu2024metacloakpreventingunauthorizedsubjectdriven}. It is computed as the cosine similarity score between embeddings of generated images and mean of reference image embeddings. For the CIFC dataset \cite{cidm} and the ImageNet dataset \cite{imagenet}, we utilize ResNet-152 \cite{resnet} as the image encoder. On the other hand, we have utilized VGG-Face \cite{vggface} for CelebA \cite{celeba}. Further, \textit{Fretchet Inception Distance (FID)} evaluates the quality and diversity of images by comparing the Inception-v3 feature distributions of the reference images and generated images \cite{fid}.

\subsection{Additional Experiments}
\noindent Beyond CLIP scores, we utilize two other metrics,  \textit{Identity Matching Score (IMS)} and \textit{Fretchet Inception Distance (FID)}. IMS that measures the semantic closeness of the generated image and reference image \cite{liu2024metacloakpreventingunauthorizedsubjectdriven}. Specifically, this metric is designed to measure the identity match between of the concepts in generated images. Whereas FID compares the distributions of generated and reference images using Inception-v3 features \cite{fid}. Based on these metrics, as shown in Table \ref{tab:addl_exp}, we observe that FL2T outperforms CIDM across the three datasets, showing trends akin to CLIP \cite{clip} scores. 

On comparison with Multi-concept Customization \cite{Kumari2023MultiConcept}, FL2T delivers consistent identity gains , boosting IMS by 12–31 points across concepts, with strong IA improvements on most concepts. Importantly, it achieves lower (better) FID indicating high quality image generation results as shown in Table \ref{tab:multi_concept}.

\begin{table*}[!h]
    \centering
    \renewcommand{\arraystretch}{1.5}
    \resizebox{\textwidth}{!}{
    \begin{tabular}{cc|cc|cc|cc|cc|cc|cc|cc|cc|cc|cc|cccc}
        \hline\hline
        & \multirow{2}{*}{\textbf{Methods}} 
        & \multicolumn{2}{|c|}{\textbf{V1}} 
        & \multicolumn{2}{|c|}{\textbf{V2}} 
        & \multicolumn{2}{|c|}{\textbf{V3}} 
        & \multicolumn{2}{|c|}{\textbf{V4}} 
        & \multicolumn{2}{|c|}{\textbf{V5}} 
        & \multicolumn{2}{|c|}{\textbf{V6}} 
        & \multicolumn{2}{|c|}{\textbf{V7}} 
        & \multicolumn{2}{|c|}{\textbf{V8}} 
        & \multicolumn{2}{|c|}{\textbf{V9}} 
        & \multicolumn{2}{|c|}{\textbf{V10}} 
        & \multicolumn{2}{|c}{\textbf{Avg.}} \\
        \cline{3-24}  
        & & \textbf{IA} ($\uparrow$) & \cellcolor{lightblue}\textbf{TA} ($\uparrow$) 
          & \textbf{IA} ($\uparrow$) & \cellcolor{lightblue}\textbf{TA} ($\uparrow$) 
          & \textbf{IA} ($\uparrow$) & \cellcolor{lightblue}\textbf{TA} ($\uparrow$) 
          & \textbf{IA} ($\uparrow$) & \cellcolor{lightblue}\textbf{TA} ($\uparrow$) 
          & \textbf{IA} ($\uparrow$) & \cellcolor{lightblue}\textbf{TA} ($\uparrow$) 
          & \textbf{IA} ($\uparrow$) & \cellcolor{lightblue}\textbf{TA} ($\uparrow$)
          & \textbf{IA} ($\uparrow$) & \cellcolor{lightblue}\textbf{TA} ($\uparrow$) 
          & \textbf{IA} ($\uparrow$) & \cellcolor{lightblue}\textbf{TA} ($\uparrow$) 
          & \textbf{IA} ($\uparrow$) & \cellcolor{lightblue}\textbf{TA} ($\uparrow$) 
          & \textbf{IA} ($\uparrow$) & \cellcolor{lightblue}\textbf{TA} ($\uparrow$) & \textbf{IA} ($\uparrow$) & \cellcolor{lightblue}\textbf{TA} ($\uparrow$) \\
        \hline

        & \shortstack{\textbf{Multi-concept}\\ \textbf{Customization}} \cite{Kumari2023MultiConcept} &
        74.4 & \cellcolor{lightblue}\underline{77.6} &
        76.6 & \cellcolor{lightblue}73.4 &
        73.4 & \cellcolor{lightblue}\underline{74.7} &
        69.7 & \cellcolor{lightblue}73.9 &
        78.2 & \cellcolor{lightblue}\underline{80.7} &
        65.9 & \cellcolor{lightblue}\underline{76.0} &
        73.6 & \cellcolor{lightblue}66.3 &
        68.7 & \cellcolor{lightblue}76.2 &
        62.7 & \cellcolor{lightblue}75.8 &
        71.2 & \cellcolor{lightblue}\underline{83.5} &
        71.4 & \cellcolor{lightblue}\underline{75.8} \\

        & \shortstack{\textbf{Ours} with \\ proxy guidance} &
        \textbf{84.4} & \cellcolor{lightblue}74.0 & 
        \textbf{86.1} & \cellcolor{lightblue}\underline{81.6} & 
        \textbf{84.4} & \cellcolor{lightblue}70.1 & 
        \textbf{82.2} & \cellcolor{lightblue}\underline{81.1} & 
        \textbf{87.2} & \cellcolor{lightblue}78.0 & 
        \textbf{69.3} & \cellcolor{lightblue}72.9 & 
        \textbf{85.3} & \cellcolor{lightblue}\underline{70.4} & 
        \textbf{60.7} & \cellcolor{lightblue}\underline{77.6} & 
        \textbf{82.1} & \cellcolor{lightblue}\underline{76.6} & 
        \textbf{76.5} & \cellcolor{lightblue}71.6 & 
        \textbf{79.8} & \cellcolor{lightblue}{75.4} \\
        \rowcolor{lightgray}& $\Delta$ & 
           +10.0 &  -3.6 &
           +9.5  &  +8.2 &  
           +11.0 &  -4.6 & 
           +12.5 &  +7.2 &  
           +9.0  & -2.7 & 
           +3.4  &  -3.1 &  
           +11.7 &  +4.1 &  
           -8.0  &  +1.4 &  
           +19.4 &  +0.8 &  
           +5.3  &  -11.9 &  
           +8.4  &  -0.4 \\
        \hline\hline
        & \textbf{Methods} & \textbf{IMS} ($\uparrow$) & \cellcolor{lightblue}\textbf{FID} ($\downarrow$)  & \textbf{IMS} ($\uparrow$) & \cellcolor{lightblue}\textbf{FID} ($\downarrow$) & \textbf{IMS} ($\uparrow$) & \cellcolor{lightblue}\textbf{FID} ($\downarrow$) & \textbf{IMS} ($\uparrow$) & \cellcolor{lightblue}\textbf{FID} ($\downarrow$) & \textbf{IMS} ($\uparrow$) & \cellcolor{lightblue}\textbf{FID} ($\downarrow$) & \textbf{IMS} ($\uparrow$) & \cellcolor{lightblue}\textbf{FID} ($\downarrow$) & \textbf{IMS} ($\uparrow$) & \cellcolor{lightblue}\textbf{FID} ($\downarrow$) & \textbf{IMS} ($\uparrow$) & \cellcolor{lightblue}\textbf{FID} ($\downarrow$) & \textbf{IMS} ($\uparrow$) & \cellcolor{lightblue}\textbf{FID} ($\downarrow$) & \textbf{IMS} ($\uparrow$) & \cellcolor{lightblue}\textbf{FID} ($\downarrow$) & \textbf{IMS} ($\uparrow$) & \cellcolor{lightblue}\textbf{FID} ($\downarrow$) \\
        \hline
       & \shortstack{\textbf{Multi-concept}\\ \textbf{Customization}} \cite{Kumari2023MultiConcept} & 
           65.4 & \cellcolor{lightblue} \underline{160.9} & 
           62.8 & \cellcolor{lightblue} \underline{98.7} & 
           64.1 & \cellcolor{lightblue} 189.5 & 
           63.7 & \cellcolor{lightblue} 222.6 & 
           66.2 & \cellcolor{lightblue} 86.5 & 
           67.8 & \cellcolor{lightblue} 386.6 & 
           61.9 & \cellcolor{lightblue} \underline{91.2} & 
           60.5 & \cellcolor{lightblue} 386.9 & 
           62.3 & \cellcolor{lightblue} 220.0 & 
           64.8 & \cellcolor{lightblue} 379.3 & 
           63.5 & \cellcolor{lightblue} 216.2 \\

        & \shortstack{\textbf{Ours} with \\ proxy guidance} & 
           \textbf{85.2}& \cellcolor{lightblue} 71.8&
         \textbf{75.2}& \cellcolor{lightblue} 169.3 & 
         \textbf{88.5}& \cellcolor{lightblue} \underline{145.4}&
         \textbf{84.1}& \cellcolor{lightblue} \underline{202.5}&
         \textbf{85.7}& \cellcolor{lightblue} \underline{77.7}& 
         \textbf{84.8}& \cellcolor{lightblue} \underline{355.2}& 
         \textbf{75.6}& \cellcolor{lightblue} 98.4&
         \textbf{77.2}& \cellcolor{lightblue} \underline{353.4}& 
         \textbf{80.2}& \cellcolor{lightblue}\underline{189.9}&
         \textbf{95.4}& \cellcolor{lightblue}\underline{316.4}&
         \textbf{83.2} & \cellcolor{lightblue}\underline{198.0}\\
        \rowcolor{lightgray}& $\Delta$ & 
           +19.8 &  -7.3 & 
           +12.4 &  -8.4 &  
           +24.4 &  +44.1 & 
           +20.4 &  +20.1 &  
           +19.5 &  +8.8 & 
           +17.0 &  +31.4 &  
           +13.7 &  -7.2 &  
           +16.7 &  +33.5 &  
           +17.9 &  +20.1 &  
           +30.6 &  +62.9 &  
           +19.7 & +18.2 \\
        \hline\hline
    \end{tabular}
    }
    \caption{\textbf{Comparison with multi-concept method.} We compare FL2T with Multi-concept customization \cite{Kumari2023MultiConcept} on the CIFC dataset \cite{cidm} across all metrics, CLIP Image Alignment (IA), CLIP Text Alignment (TA), Identity Matching Scores (IMS) \cite{liu2024metacloakpreventingunauthorizedsubjectdriven} and Fretchet Inception Distance (FID) \cite{fid}, and notice that FL2T outperforms Multi-concept Customization across all metrics. }
    \label{tab:multi_concept}
\end{table*}

\subsection{Comparing Attention and Concatenation Operations}
\label{subsec:attn}
\noindent The main contribution of this work revolves around ``positively'' exploiting the higher-order interactions between concepts. Towards this end, we analyze two operations that are commonly utilized to extract richer representations from a set of embeddings. Consider a set of embeddings, $S = \{X, Y, Z\} \in \mathbb{R}^d$. We assume that trivial components such as linear layers for scaling in attention and downsampling in concatenation are present.

\paragraph{Attention.} Attention computes the cosine similarity between $X$ and $Y$ and adds the projected component of $X$ on $Y$ to $X$ - 
\[
    \text{Attn}(X; (X, Y)) = X + (X\cdot Y^T)Y 
\]

\noindent To develop a deeper understanding, we consider three vectors and let $a_{XY}$ be the cosine similarity between vectors $X$ and $Y$. Then, after the first attention operation where $S_1 = \{X_1, Y_1, Z_1\}$ is the output :
\[
\begin{split}
    X_1 &= X + a_{XY}Y + a_{XZ}Z \\
    Y_1 &= a_{XY}X + Y + a_{YZ}Z \\
    Z_1 &= a_{XZ}X + a_{YZ}Y + Z
\end{split}
\]
Subsequently, extending this to a second attention layer provides us:
\[
{\tiny
\begin{aligned}
X_2 =\; &\big[1 + (a^2_{XY} + a^2_{XZ})(a^2_{XY} + a^2_{YZ} + a^2_{XZ} + 3) 
          + 6a_{XY}a_{YZ}a_{XZ}\big]X  \\[6pt]
&+ \big[a_{XY}(3a^2_{YZ}-2) 
        + (a_{XY} + a_{XZ}a_{YZ})(a^2_{XY} + a^2_{YZ} + a^2_{XZ} + 6)\big]Y  \\[6pt]
&+ \big[a_{XZ}(3a^2_{YZ}-2) 
        + (a_{XZ} + a_{XY}a_{YZ})(a^2_{XY} + a^2_{YZ} + a^2_{XZ} + 6)\big]Z
\end{aligned}
}
\]
The number of pairwise interactions between the vectors of $S$ has significantly increased in the second attention layer, thereby allowing the model to capture higher-level dependencies between the vectors. It is important to note that attention is a generalized form of weighted aggregation.

\paragraph{Concatenation.} On the other hand, concatenation captures non-linear interactions between each element of the vectors. This can introduce unintended noise and alter the original embedding negatively. The operation is defined below where $a_i$, $b_i$ and $c_i$ are polynomials for the $i$-th element.

\[
\text{Concat(X, Y, Z)} = \sum_{i=0}^{d-1}a_iX_i + b_iY_i + c_iZ_i
\]

\begin{table}[!h]
\centering
\resizebox{0.47\textwidth}{!}{
\begin{tabular}{c|c|c|c}
\hline
\hline
\multirow{3}{*}{\textbf{Operation}} & \textbf{Pairwise} & \multirow{3}{*}{\shortstack{\textbf{After M}\\ \textbf{Layers}}} & \multirow{3}{*}{\shortstack{\textbf{Permutation} \\ \textbf{Invariance}}} \\
& \textbf{Interactions} & & \\
& \textbf{(per layer)} & & \\
\hline
\rowcolor{lightblue} \textbf{Summation ($x+y$)}               & $nd$      & $Mnd$ & $\checkmark$ \\

\textbf{Concatenation ($[x;y]$) }        & $0$       & $0$ & $\checkmark$\\


\rowcolor{lightblue} \textbf{Attention}                       & $n^2 d$   & $Mn^2 d$ & $\checkmark$ \\
\hline
\hline
\end{tabular}}
\caption{\textbf{Pairwise interactions.}We compute the number of pairwise interactions for combining two vectors $x,y\in\mathbb{R}^d$ across $n$ embeddings. We assume the same $k,d$ for $M$ layers. The attention function showcases its superiority by showing the largest number pairwise interactions and is permutation invariant.}
\label{tab:interactions_compact}
\end{table}

\subsection{Bounding Model Drift under Unnormalized Attention Coefficients} \label{subsec:proof}

We quantify one-step model drift by the norm of the aggregated gradient. Lemma~\ref{lem:upper-bound} shows a universal upper bound (the same crude bound as uniform summation), and Theorem~\ref{thm:existence-reduced} gives a simple, explicit construction proving that unnormalized attention can \emph{strictly reduce} drift relative to uniform summation.

\paragraph{Setup and Assumptions}
Let $(\mathbb{R}^d,\langle\cdot,\cdot\rangle)$ be a real inner-product space with norm $\|v\|=\sqrt{\langle v,v\rangle}$. We have $N$ concepts with losses $\ell_i(\theta)$ and gradients
\[
m_i \;:=\; \nabla_\theta \ell_i(\theta)\in\mathbb{R}^d,\qquad i=1,\dots,N.
\]
We compare two aggregate gradients:
\[
M_{\mathrm{CIDM}} \;:=\; \sum_{i=1}^N m_i\]
\[
M_{\mathrm{FL2T}} \;:=\; \sum_{i=1}^N \lambda_i\,m_i,\;\;\lambda_i\in[-1,1].
\]
We impose \emph{no} normalization on $(\lambda_i)$ (in particular, $\sum_i\lambda_i$ need not equal $1$). One-step drift with learning rate $\eta>0$ is proportional to $\|M\|$, so we compare $\|M_{\mathrm{FL2T}}\|$ to $\|M_{\mathrm{CIDM}}\|$.

\begin{lemma}[Universal Upper Bound]
\label{lem:upper-bound}
For any coefficients $\lambda_i\in[-1,1]$,
\[
\|M_{\mathrm{FL2T}}\|
\;=\;
\Big\|\sum_{i=1}^N \lambda_i m_i\Big\|
\;\le\;
\sum_{i=1}^N |\lambda_i|\,\|m_i\|
\;\le\;
\sum_{i=1}^N \|m_i\|.
\]
\end{lemma}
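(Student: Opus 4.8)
The plan is to establish the two inequalities sequentially, each following from a standard property of the norm on $(\mathbb{R}^d,\langle\cdot,\cdot\rangle)$. First I would apply the triangle inequality to the aggregate gradient $M_{\mathrm{FL2T}}=\sum_{i=1}^N \lambda_i m_i$, which gives $\|\sum_i \lambda_i m_i\| \le \sum_i \|\lambda_i m_i\|$. Since the norm is absolutely homogeneous, each summand factorizes as $\|\lambda_i m_i\| = |\lambda_i|\,\|m_i\|$, yielding the first claimed bound $\|M_{\mathrm{FL2T}}\| \le \sum_i |\lambda_i|\,\|m_i\|$.

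For the second inequality I would invoke the box constraint $\lambda_i\in[-1,1]$, which forces $|\lambda_i|\le 1$ for every $i$. Because each $\|m_i\|\ge 0$, termwise monotonicity gives $|\lambda_i|\,\|m_i\|\le\|m_i\|$, and summing over $i$ produces $\sum_i |\lambda_i|\,\|m_i\| \le \sum_i \|m_i\|$. Chaining the two bounds then completes the argument.

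I do not expect a genuine obstacle here: the statement is an immediate consequence of the triangle inequality and the absolute homogeneity of the norm, so the only point requiring attention is that the coefficient bound $|\lambda_i|\le 1$ is applied correctly, which is precisely where the constraint $\lambda_i\in[-1,1]$ enters. The value of the lemma is thus not its difficulty but its framing: it certifies that unnormalized attention never drifts worse than the crude worst case of uniform summation, thereby motivating and setting up the strict-improvement construction of Theorem~\ref{thm:existence-reduced}.
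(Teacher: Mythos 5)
Your proof is correct and follows exactly the same route as the paper's: the triangle inequality, absolute homogeneity of the norm (giving $\|\lambda_i m_i\| = |\lambda_i|\,\|m_i\|$), and then the bound $|\lambda_i|\le 1$ applied termwise. There is nothing to add; your argument matches the paper's proof of Lemma~\ref{lem:upper-bound} step for step.
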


\begin{proof}
By the triangle inequality and homogeneity of norms,
{\small
\[
\Big\|\sum_{i=1}^N \lambda_i m_i\Big\|
\;\le\;
\sum_{i=1}^N \|\lambda_i m_i\|
\;=\;
\sum_{i=1}^N |\lambda_i|\,\|m_i\|
\;\le\;
\sum_{i=1}^N \|m_i\|
\]}
because $|\lambda_i|\le 1$ for all $i$.
\end{proof}

\begin{theorem}[Existence of Reduced Drift]
\label{thm:existence-reduced}
If $M_{\mathrm{CIDM}}\neq 0$, then there exists $\boldsymbol{\lambda}\in[-1,1]^N$, not all equal to $1$, such that$\|M_{\mathrm{FL2T}}\| \;<\; \|M_{\mathrm{CIDM}}\|$.
\end{theorem}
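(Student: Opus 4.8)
The plan is to perturb the all-ones coefficient vector by down-weighting a single, carefully chosen concept, and reduce the claim to an elementary one-variable inequality. Write $M := M_{\mathrm{CIDM}} = \sum_{i=1}^N m_i$, fix an index $g$ (to be selected), and set $\lambda_i = 1$ for every $i \neq g$ while $\lambda_g = 1 - \varepsilon$ for a step size $\varepsilon \in (0,1]$ still to be determined. Then $M_{\mathrm{FL2T}} = M - \varepsilon m_g$, and expanding the squared norm by bilinearity of the inner product gives
\[
\|M_{\mathrm{FL2T}}\|^2 \;=\; \|M\|^2 \;-\; 2\varepsilon\,\langle M, m_g\rangle \;+\; \varepsilon^2\,\|m_g\|^2 .
\]
This reduces the theorem to making the correction term $-2\varepsilon\langle M,m_g\rangle + \varepsilon^2\|m_g\|^2$ strictly negative for some admissible $\varepsilon$, which in turn requires only that $\langle M, m_g\rangle > 0$ together with a small enough step.

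The key step I would carry out first is to guarantee the existence of such a positively aligned concept $g$. The idea is an averaging (pigeonhole) argument: summing the alignments over all concepts yields
\[
\sum_{i=1}^N \langle M, m_i\rangle \;=\; \Big\langle M, \sum_{i=1}^N m_i\Big\rangle \;=\; \langle M, M\rangle \;=\; \|M\|^2 ,
\]
which is \emph{strictly} positive because $M \neq 0$ by hypothesis. A finite collection of real numbers whose sum is positive must contain at least one strictly positive entry, so some index $g$ satisfies $\langle M, m_g\rangle > 0$; moreover this inequality forces $m_g \neq 0$, hence $\|m_g\|^2 > 0$.

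With such a $g$ fixed, I would then choose $\varepsilon = \min\{1,\ \langle M, m_g\rangle / \|m_g\|^2\}$, which is strictly positive and satisfies $\varepsilon \le \langle M,m_g\rangle/\|m_g\|^2 < 2\langle M,m_g\rangle/\|m_g\|^2$. Substituting this bound shows $2\langle M,m_g\rangle - \varepsilon\|m_g\|^2 > 0$, so the correction term is strictly negative and $\|M_{\mathrm{FL2T}}\| < \|M\| = \|M_{\mathrm{CIDM}}\|$. Finally I would verify feasibility: since $\varepsilon \in (0,1]$ we have $\lambda_g = 1-\varepsilon \in [0,1) \subset [-1,1]$ while all other $\lambda_i = 1$, so $\boldsymbol{\lambda}\in[-1,1]^N$ is not the all-ones vector, exactly as required. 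I expect the only non-mechanical ingredient to be the averaging argument of the second paragraph — recognizing that $\sum_i \langle M, m_i\rangle = \|M\|^2$ is what converts the hypothesis $M \neq 0$ into a guaranteed positively aligned direction; everything thereafter is a routine estimate on a scalar quadratic in $\varepsilon$.
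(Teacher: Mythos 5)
Your proof is correct and follows essentially the same route as the paper's: the averaging identity $\sum_i \langle M_{\mathrm{CIDM}}, m_i\rangle = \|M_{\mathrm{CIDM}}\|^2 > 0$ to extract a positively aligned index, the single-coordinate perturbation $\lambda_g = 1-\varepsilon$, and the quadratic expansion in $\varepsilon$. The only cosmetic difference is that you fix a specific admissible $\varepsilon = \min\{1, \langle M, m_g\rangle/\|m_g\|^2\}$ while the paper states the full interval of valid step sizes; the argument is identical in substance.
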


\begin{proof}
Set $M_{\mathrm{CIDM}} = \sum_{k=1}^N m_k$. Then
\begin{align*}
\sum_{k=1}^N \langle M_{\mathrm{CIDM}},\,m_k\rangle
&= \Big\langle M_{\mathrm{CIDM}},\,\sum_{k=1}^N m_k\Big\rangle \\
&= \langle M_{\mathrm{CIDM}},\,M_{\mathrm{CIDM}}\rangle \\
&= \|M_{\mathrm{CIDM}}\|^2 \;>\;0.
\end{align*}
Consequently, there exists an index $k^\star$ for which
\[
\langle M_{\mathrm{CIDM}}, m_{k^\star} \rangle > 0.
\]
Fix such an index $k^\star$. For any $\varepsilon \in (0,1]$, define coefficients
\[
\lambda_{k^\star} = 1 - \varepsilon,
\qquad
\lambda_j = 1 \quad \text{for all } j \neq k^\star.
\]
These coefficients lie in $[-1,1]$ and are not all equal to~1, so the resulting vector
\[
M_{\mathrm{FL2T}}
= \sum_{i=1}^N \lambda_i m_i
= M_{\mathrm{CIDM}} - \varepsilon\, m_{k^\star}
\]
is feasible. Expanding the squared norm yields
\begin{align*}
\|M_{\mathrm{FL2T}}\|^2
&= \|M_{\mathrm{CIDM}} - \varepsilon m_{k^\star}\|^2 \\
&= \|M_{\mathrm{CIDM}}\|^2
   - 2\varepsilon\,\langle M_{\mathrm{CIDM}}, m_{k^\star} \rangle
   + \varepsilon^2 \|m_{k^\star}\|^2
\end{align*}
Since $\langle M_{\mathrm{CIDM}}, m_{k^\star} \rangle > 0$, the quadratic expression on the right-hand side is strictly smaller than $\|M_{\mathrm{CIDM}}\|^2$ whenever
\[
0 < \varepsilon 
< \min \left\{
1,\;
\frac{2\,\langle M_{\mathrm{CIDM}},\, m_{k^\star}\rangle}{\|m_{k^\star}\|^2}
\right\}.
\]
Thus $\|M_{\mathrm{FL2T}}\|^2 < \|M_{\mathrm{CIDM}}\|^2$, which implies
$\|M_{\mathrm{FL2T}}\| < \|M_{\mathrm{CIDM}}\|$. Hence, there exists a feasible coefficient vector $\boldsymbol{\lambda}$ strictly reducing the norm, completing the proof.
\end{proof}

\paragraph{Remark (degenerate case).}
If $M_{\mathrm{CIDM}}=0$, then $\|M_{\mathrm{CIDM}}\|=0$ is already minimal, so no strict decrease is possible; nonetheless, Lemma~\ref{lem:upper-bound} still holds.

\begin{figure*}[!h]
    \centering
    \includegraphics[width=\textwidth]{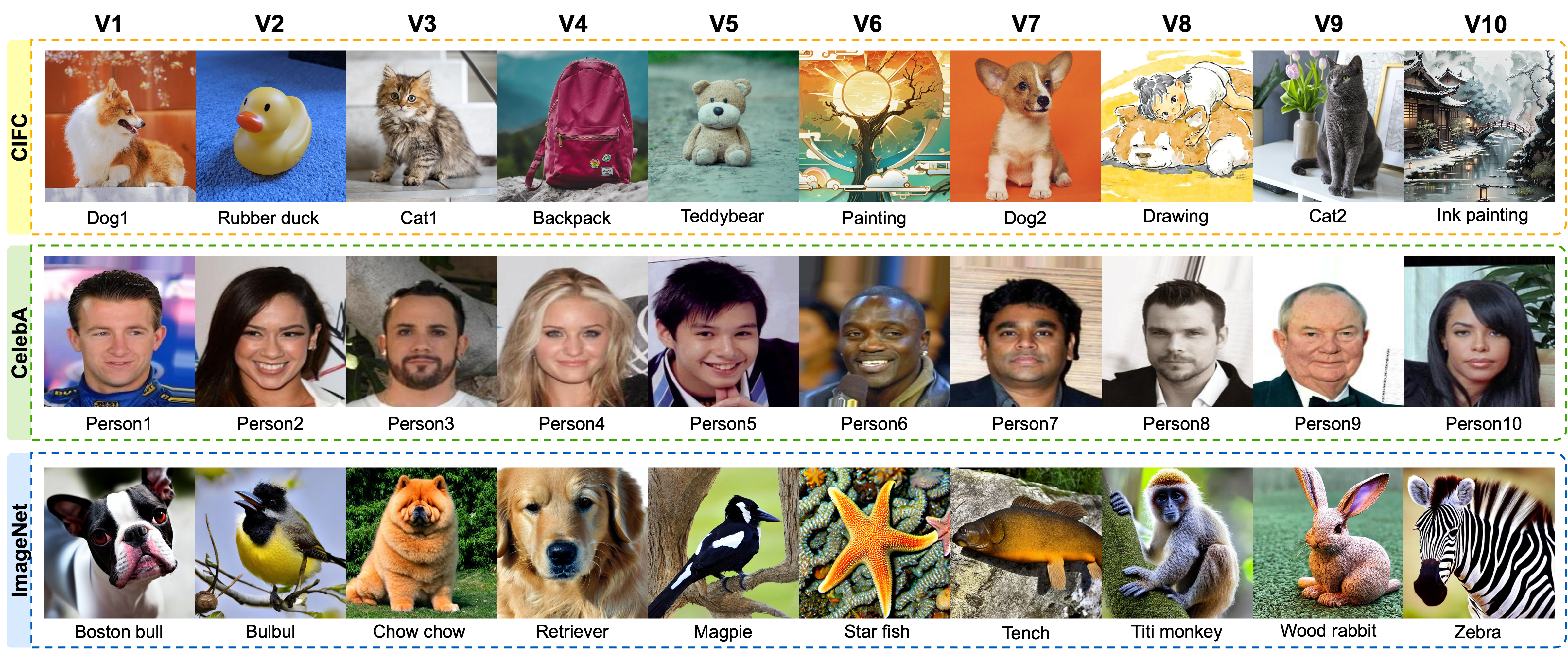}
    \caption{\textbf{Datasets used for the experiments.} We present an overview of the images used in our experiments for the CIDM \cite{cidm}, CelebA \cite{celeba} and ImageNet \cite{imagenet} datasets.}
    \label{fig:datasets}
\end{figure*}

\subsection{Background on Custom Diffusion Models.} 
\label{subsec:bg_cdms}

Latent diffusion models (LDMs) \cite{ldm1, ldm2} rely on conditional inputs, such as text prompts \cite{ldm_text1, ldm_text2} or images \cite{ldm_img1, ldm_img2}, to guide the generation of images. These models utilize an encoder $E(\cdot)$ and a decoder $D(\cdot)$ to facilitate image synthesis in the latent space. Custom diffusion models (CDMs) \cite{cdm1, cdm2, cdm3} extend LDMs by incorporating low-rank adaptation (LoRA) \cite{lora1, lora2} to fine-tune pretrained diffusion models \cite{finetune1, finetune2} for personalized concept learning.

Given a personalized image-text pair $(x, p)$, the encoder $E(\cdot)$ maps $x$ to a latent representation $z$, with $z_t$ denoting the noisy latent feature at timestep $t$ ($t = 1, \dots, T$). The text encoder $\Gamma(\cdot)$, such as a pretrained CLIP model \cite{clip}, maps the text prompt $p$ to a textual embedding $c = \Gamma(p)$. The objective for learning a personalized concept ${(x, p)}$ at timestep $t$ is defined as:

\begin{equation}\label{eq:1}
    L_{CDM} = \mathcal{E}_{z \sim E(x), c, \varepsilon \sim \mathcal{N}(0,I), t} \left[ \| \varepsilon - \varepsilon_{\theta'}(z_t | c, t) \|_2^2 \right]
\end{equation}

where $\varepsilon_{\theta'}(\cdot)$ represents the denoising UNet \cite{finetune2, ldm_text1} that gradually denoises $z_t$ by estimating the Gaussian noise $\varepsilon \sim \mathcal{N}(0, I)$. The parameter set $\theta'$ corresponds to  $\theta' = \theta_0 + \Delta\theta$, where $\theta_0 = \{W_l^0\}_{l=1}^{L}$ denotes the pretrained weights in LDMs, and $\Delta\theta = \{\Delta W_l\}_{l=1}^{L}$ corresponds to the LoRA-updated parameters \cite{mix_of_show, Kumari2023MultiConcept}. Here, $W_l^0, \Delta W_l \in \mathbb{R}^{a \times b}$ are the pretrained and low-rank weight matrices in the $l$-th transformer layer of $\theta'$, respectively, where $a$ and $b$ are matrix dimensions. Following \cite{dreambooth, motionDirector}, the low-rank update $\Delta W_l$ can be factorized as $\Delta W_l = A_l B_l$,
where $A_l \in \mathbb{R}^{a \times r}$ and $B_l \in \mathbb{R}^{r \times b}$ with rank $r \ll \min(a, b)$.

\begin{figure*}[!h]
    \centering
    \includegraphics[width=\textwidth]{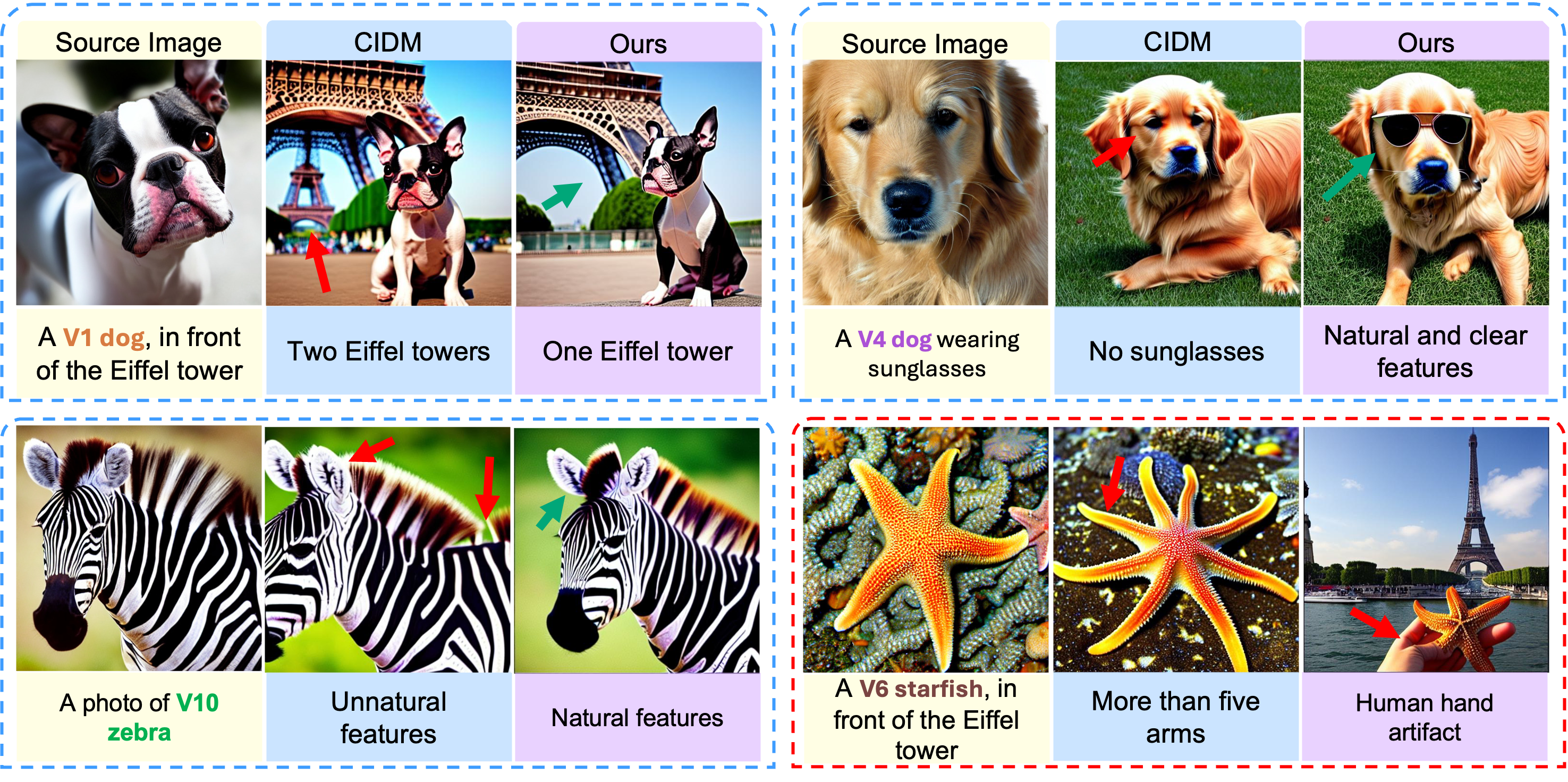}
    \caption{\textbf{Qualitative Analysis on the ImageNet dataset \cite{imagenet}.} We compare the synthesized images by CIDM \cite{cidm} and FL2T (Ours). The images are generated with a source image and an associated text prompt as input. Images with red and green arrows indicate regions of undesirable and desirable qualities, and their reasons are stated below each image. FL2T preseverves features and conforms to the text prompt better than CIDM. We have also shown a failure case of FL2T (bottom right in \textcolor{red}{red box}), where we observe a human hand artifact, but the identity of the starfish is preserved (only five arms).}
    \label{fig:qual_celeba}
\end{figure*}

\begin{figure*}[!h]
    \centering
    \includegraphics[width=\textwidth]{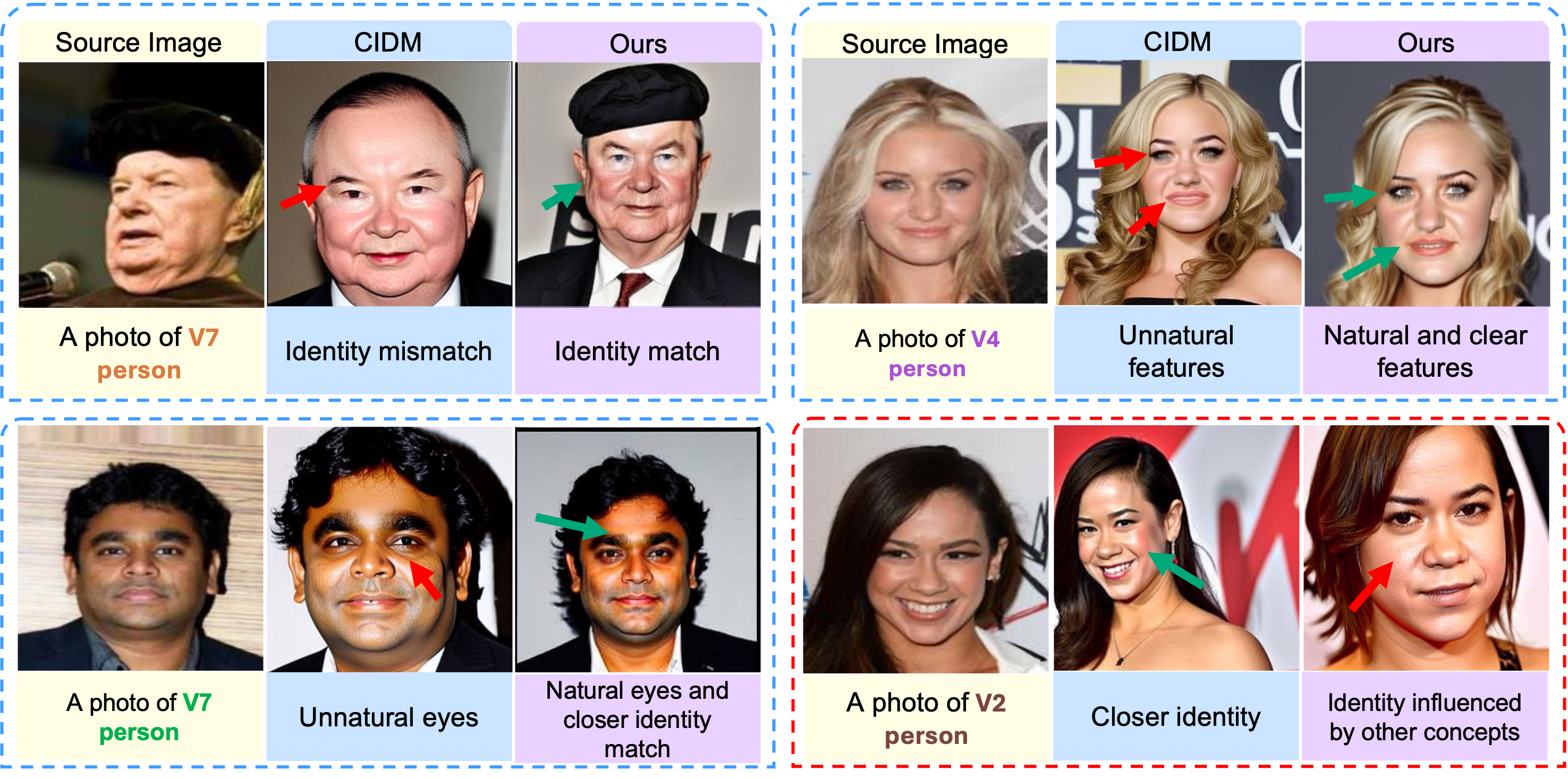}
    \caption{\textbf{Qualitative Analysis on the CelebA dataset \cite{celeba}. }We compare the synthesized images by CIDM \cite{cidm} and FL2T (Ours). The images are generated with a source image and an associated text prompt as input. Images with red and green arrows indicate regions of undesirable and desirable qualities, and their reasons are stated below each image. FL2T preseverves identity and features better than CIDM. We have also shown a failure case of FL2T (bottom right in \textcolor{red}{red box}), where the identity of V2 person was influenced by other concepts in the dataset.}
    \label{fig:qual_celeba}
\end{figure*}

\begin{figure*}[!h]
    \centering
    \includegraphics[width=\textwidth]{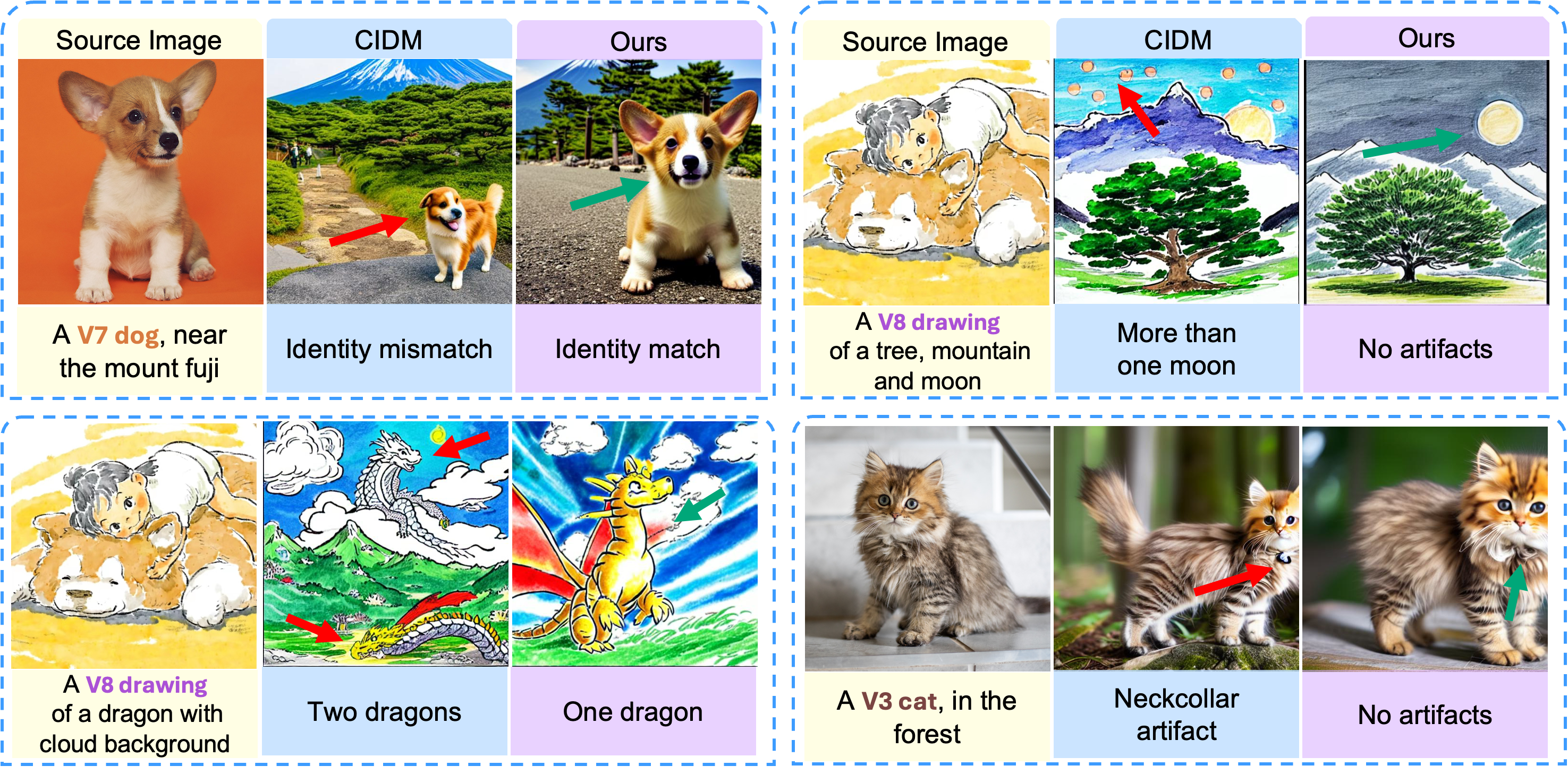}
    \caption{\textbf{More Qualitative Samples on the CIFC dataset \cite{cidm}.} We compare the synthesized images by CIDM \cite{cidm} and FL2T (Ours). The images are generated with a source image and an associated text prompt as input. Images with red and green arrows indicate regions of undesirable and desirable qualities, and their reasons are stated below each image. }
    \label{fig:qual_results2}
\end{figure*}


\end{document}